\newtheorem{example}{Example}
\newtheorem{theorem}{Theorem}
\newtheorem{proposition}{Proposition}     %added
\newtheorem{definition}{Definition}     %added
\newtheorem{lemma}{Lemma}               %added
\newtheorem{corollary}{Corollary}       %added
\newcommand{\ff}{\mbox{\textit{ff}}}    %added
\newcommand{\comment}[1]{}              %added
\DeclareRobustCommand{\DirectNESS}{(\tikz[baseline=-\the\dimexpr\fontdimen22\textfont2\relax,inner sep=0pt] \draw[dash pattern={on 4.5pt off 4.5pt}](0,0) -- (5mm,0);)}
\DeclareRobustCommand{\NESS}{(\tikz[baseline=-\the\dimexpr\fontdimen22\textfont2\relax,inner sep=0pt] \draw[dash pattern={on 0.84pt off 2.51pt}](0,0) -- (5mm,0);)}
\DeclareRobustCommand{\actual}{(\tikz[baseline=-\the\dimexpr\fontdimen22\textfont2\relax,inner sep=0pt] \draw[line width=0.75](0,0) -- (5mm,0);)}
\title{Integrating Temporality and Causality into Acyclic Argumentation Frameworks using a Transition System}
\author{%
    Author name
    \affiliations
    Affiliation
    \emails
    email@example.com    % email
}
\author{%
Yann Munro$^{*}$\and
Camilo Sarmiento$^{*}$\and
Isabelle Bloch\and
Gauvain Bourgne \and
Marie-Jeanne Lesot \\
\affiliations
Sorbonne Université, CNRS, LIP6, Paris, France\\
\emails
\{firstname.surname\}@lip6.fr
}
\begin{document}

    \maketitle
    {\let\thefootnote\relax\footnotetext{$^*$ These authors contributed equally to this work.}}
    
    %%%%%%%%%%%%%%%%%%%%%%%%%%%%%%%%%%%%%%%%%%%%%%%%%%%%%%%%%%%%%%%%%%%%%
    %%%%%%%                       ABSTRACT                    %%%%%%%%%%%
    %%%%%%%%%%%%%%%%%%%%%%%%%%%%%%%%%%%%%%%%%%%%%%%%%%%%%%%%%%%%%%%%%%%%%
    \begin{abstract}
    In the context of abstract argumentation, we present the benefits of considering temporality, i.e. the order in which arguments are enunciated, as well as causality. We propose a formal method to rewrite the concepts of acyclic abstract argumentation frameworks into an action language, that allows us to model the evolution of the world, and to establish causal relationships between the enunciation of arguments and their consequences, whether direct or indirect. An Answer Set Programming implementation is also proposed, as well as perspectives towards explanations.
\end{abstract}
    
    %%%%%%%%%%%%%%%%%%%%%%%%%%%%%%%%%%%%%%%%%%%%%%%%%%%%%%%%%%%%%%%%%%%%%
    %%%%%%%                     INTRODUCTION                  %%%%%%%%%%%
    %%%%%%%%%%%%%%%%%%%%%%%%%%%%%%%%%%%%%%%%%%%%%%%%%%%%%%%%%%%%%%%%%%%%%
    \section{Introduction}
    
    The abstract argumentation framework (AAF), first introduced in~\cite{dung_acceptability_1995}, provides a suitable framework for representing and reasoning about contradicting information. % through arguments.
    It makes it possible to find sets of arguments that can be accepted together and provides explanations on why such sets have been accepted or not. Thus, AAF provides convenient tools to model and reason about debates. 
    % \Camu{Indeed (irait avec modif précèdente)} 
    However, it is a static framework which does not include a notion of temporality that seems crucial for modelling dialogues. Several types of approaches have been proposed to solve this problem. One type modifies the argumentation graph by adding or deleting attacks and arguments using specific operators~\cite{doutre2017dynamic}, and amounts to considering an AAF at each time step. Another proposal is to transform an argumentation system into a logical formalism and then use revision or belief change operators to update the argumentation system~\cite{de2016argumentation}. We propose to use another logical formalism, action languages, to model the dynamics of a dialogue.
    % This approach also allows modeling quite naturally interactions between agents. In this context, we are particularly interested in two notions: the modeling capacities of the framework and the generation of explanations adapted to humans. This last point has recently been the subject of particular attention in argumentation~\cite{cyras_survey}. Nevertheless, although this formalism is very well adapted to the modeling of interactions, it is a static framework which, by default, does not include a notion of temporality. \Camu{}It seems possible to artificially add it by considering an argumentation graph at each time step, \Isa{or an time-based attributed graph?}. However, this is not the approach we have chosen here \Isa{because...?}.\Camu{}

    On the other hand, action languages offer tools to reason about action and change and have been naturally conceived to include the notion of time. The action language introduced by~\cite{sarmiento_action_2022-1}
    %is one of them. It 
    has been designed to determine the evolution of the world given a set of actions corresponding to deliberate choices of the agent, the occurrence of which can trigger a chain reaction through external events. We choose this action language for three reasons. First, it allows for concurrency of events. Other languages also offering this advantage, such as~$\mathcal{C}$~\cite{giunchiglia_action_1998} or PDDL+~\cite{fox_modelling_2006},
    %also offer this advantage. However, their semantics 
    are adapted to non-deterministic or durative actions, which increases complexity and is not useful in our framework. Secondly, there exists a definition of actual causality that is suitable for this action language. 
    % According to~\cite{miller_explanation_2018}, causality is essential for the generation of explanations. 
    Finally,~\cite{sarmiento_action_2023} propose a  sound and complete translation into ASP. 
    %In this paper, we 
    We propose to take advantage of these properties to  study the causal relations in a dialogue, paving the way for the search of explanations.

    This work constitutes a first step of a proposition to bring together different existing tools from the Knowledge Representation and Reasoning field to tackle issues that arise in abstract argumentation when considering causality and temporality together. This paper is structured as follows. Section \ref{sec:AAF} briefly recalls the principles of  %introduces~\cite{dung_acceptability_1995}'s formalism of
    abstract argumentation. Section~\ref{sec:Action_language} 
    %provides a description of 
    describes the chosen action language and the actual causality definition suitable for it. Section~\ref{sec:translation} proposes the main contributions of this paper: a formalisation of acyclic abstract argumentation graphs into an action language, with its corresponding implementation in ASP.
    Section~\ref{sec:formalProp} establishes its formal properties, 
    %of this transformation such as 
    including its soundness and completeness, as well as the relevance of the temporality inclusion. Section~\ref{sec:discussion} illustrates the exploitation of the proposed formalisation to get enriched information, as graphical representations and causal relations. Section~\ref{sec:conclusion}  concludes the paper. 
    %and opens some perspectives on the generation of explanations. \Isa{on peut ne pas mentionner la section 7 dans le plan s'il faut gagner deux lignes}

    %%%%%%%%%%%%%%%%%%%%%%%%%%%%%%%%%%%%%%%%%%%%%%%%%%%%%%%%%%%%%%%%%%%%%
    %%%%%%%     PARTIE : Système d'argumentation abstrait     %%%%%%%%%%%
    %%%%%%%%%%%%%%%%%%%%%%%%%%%%%%%%%%%%%%%%%%%%%%%%%%%%%%%%%%%%%%%%%%%%%
    \section{Abstract Argumentation Framework, AAF}
%Système d'argumentation abstrait}
\label{sec:AAF}

This section briefly recalls the basics of~\cite{dung_acceptability_1995}'s AAF.

\begin{definition}\label{def:AAF}
    An \emph{abstract argumentation framework}~$AAF$ is a couple~${AF = (A,R)}$, where~$A$ is a finite set of arguments and~$R$ is a set of \emph{attacks} corresponding to a binary relation~$A \times A$. An argument~$x\in A$ attacks~$y\in A$ if~$(x,y)\in R$.
\end{definition}

As $R$ is a binary relation with a finite support, an AAF can be represented using a graph.

\begin{example}
\label{ex:IRM_ou_radio}

To illustrate these notions, we introduce an argumentative scenario modelling the interaction between a requesting physician, D, and a radiologist, R, concerning an examination of a $n$ month old baby for pathology Z.
\\ \textit{D:} Can you do an X-ray scanner (CT) for this baby? ($a$) \\
\textit{R:} It is better for a baby to avoid ionising radiations. ($b$) \\
\textit{R:} I can suggest an MRI (magnetic resonance imaging) in two days' time. ($c$) \\
\textit{D:} Can Z be seen on an MRI? ($d$) \\
\textit{R:} Yes, of course! If you want confirmation, look at the guide to good radiology practice. ($e$) \\
\textit{D:} But a baby might move and so you might not be able to get the information you are looking for because the image may be artefacted. ($f$) \\
\textit{R:} Do not worry, I am used to doing MRI for babies. ($g$) \\
\textit{D:} Does not it cost the hospital a lot more to do an MRI?~($h$) I also have to check with the patient's family because it might cost them more. ($i$) \\
\textit{R:}  No problem here. The high cost includes the experience gained by my team so that in the future this kind of delicate examination can be performed without me. ($j$) \\
\textit{D:} I have just spoken to the family, no problem with the MRI, the exam is refunded.~($k$) \\
\textit{D:} 
However, the family is not comfortable with the idea of having to wait two days, could not you do the exam before?~($l$)\\
\textit{R:} No my schedule for today is already full. My next slot is in two days, as I told you.~($m$)

After this discussion, the decision is made to schedule an MRI in two days' time. But later that day, the doctor receives a call from the family saying that the baby is really not well and insisting on the urgency of the examination. Therefore, the doctor contacts the radiologist to add a final argument.\\
\textit{D:} It is very urgent for the baby, we need a place today! ($n$)

From this dialogue, we can extract manually arguments and their relations to create the AAF represented in Figure~\ref{fig:ex_radio} with the following arguments:~$\{ \textbf{a}$: Scanner, $b$: Ionising radiation, $\textbf{c}$: MRI in two days, $d$: Z not visible by MRI, $e$: Z visible by MRI, $f$: Difficult conditions, $g$: High experience, $h$: High cost for the hospital, $i$: High cost for the patient, $j$: Not problematic for the hospital, $k$: The family is covered for an MRI, $\textbf{l}$: MRI today, $m$: No availability today, $n$: It is an emergency!$\}$. Arguments $a,c,l$ are called the decision variables, their acceptance being the criterion triggering a decision: CT, MRI in two days, or MRI today.

The obtained argumentation system is a possibility of graph that can be extracted from this dialogue.
%, a proposition that we will use later. It is possible to do 
The extraction process can be done automatically
%in an automated manner through a set of methods called 
using so-called argument mining methods~\cite{lippi2016argumentation}.

\begin{figure}[t]
    \centering
    \begin{tikzpicture}[scale=0.8,transform shape, node distance={15mm}, main/.style = {draw, circle}]
            
                \node[main] (1) {$a$}; 
                \node[main] (2) [left of=1] {$b$};
                \node[main] (3) [right of=1] {$c$};
                \node[main] (4) [right of=3] {$l$};
                \node[main] (5) [below right of=3] {$h$};
                \node[main] (6) [left of=5] {$d$};
                \node[main] (7) [right of=5] {$i$};
                \node[main] (8) [above of=3] {$n$};
                \node[main] (9) [above of=4] {$m$};
                \node[main] (10) [below of=6] {$e$};
                \node[main] (11) [below of=5] {$j$};
                \node[main] (12) [below of=7] {$k$};
                \node[main] (13) [left of=10] {$f$};
                \node[main] (14) [left of=13] {$g$};

                \draw[->] (2) -- (1);
                \draw[->] (5) -- (3);
                \draw[->] (6) -- (3);
                \draw[->] (7) -- (3);
                \draw[->] (8) -- (3);
                \draw[->] (5) -- (4);
                \draw[->] (6) -- (4);
                \draw[->] (7) -- (4);
                \draw[->] (9) -- (4);
                \draw[->] (10) -- (6);
                \draw[->] (13) -- (10);
                \draw[->] (14) -- (13);
                \draw[->] (11) -- (5);
                \draw[->] (12) -- (7);
                \draw[->] (8) -- (9);
                
            \end{tikzpicture}
    \caption{
    %Graphe d'argumentation associé à l'exemple
    Argumentation graph associated with Example~\ref{ex:IRM_ou_radio}.}
    \label{fig:ex_radio}
\end{figure}
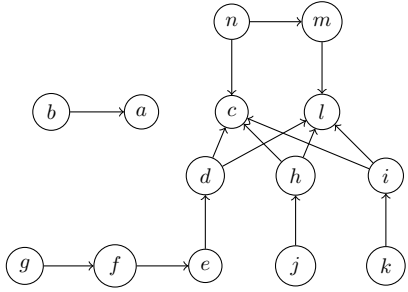
\end{example}

%\textbf{Remark --} 
Note that this is a static representation of the dialogue from which any notion of temporality has been erased. Thus, if the arguments had been stated in a different order, it would not change the graph. This will be important to address causality in Section~\ref{sec:discu_causalite}.

%\medskip
%\label{sec:def_AAF}

Once an argumentation graph has been constructed, it is possible to reason on it to determine sets of arguments that can be considered as accepted.

The \emph{set of direct attackers} of~$x \in A$ is denoted by~$Att_{x}=\{ y\in A \mid (y,x)\in R\}$. A set~$S$ is \emph{conflict-free} if~${\forall (x,y) \in S^2}$, $(x,y)\notin R$. An argument~$x \in A$ is \emph{acceptable} by~$S$ if~$\forall y\in %Att_a, 
Att_x, \exists z \in S \cap Att_y$.

Then an \emph{admissible set}~$S$ is defined as a conflict-free set whose 
elements are all acceptable by~$S$ itself. 
%in which all these elements are acceptable by the set. 
For an acyclic graph, this is the only extension-based semantics as all others coincide
%in a single, admissible extension, and will therefore not be
with it and are therefore not discussed here~\cite{rahwan2009argumentation}. 
% In addition to these definitions, we can also define extension-based semantics~\cite{rahwan2009argumentation}. These are properties that must be satisfied by a set of arguments in order for it to be accepted. In the case of acyclic graphs, all these semantics coincide in a single, admissible extension, and will therefore not be discussed here~\cite{rahwan2009argumentation}.

\addtocounter{example}{-1}
\begin{example} (continued) --
The argument graph
%obtained to model the dialogue between the radiologist and the doctor 
is acyclic. To determine the set of acceptable arguments, it is sufficient to start from the non-attacked arguments, here~${b,g,j,k,n}$. They are accepted by default. Then, an argument attacked by at least one accepted argument cannot be accepted. By applying this rule, we obtain that argument $l$ is accepted, in contrast to $a$ and $c$. Therefore, the final decision is to perform an emergency MRI today.
\end{example}

    %%%%%%%%%%%%%%%%%%%%%%%%%%%%%%%%%%%%%%%%%%%%%%%%%%%%%%%%%%%%%%%%%%%%%
    %%%%%%%          PARTIE : Action language semantics     %%%%%%%%%%%
    %%%%%%%%%%%%%%%%%%%%%%%%%%%%%%%%%%%%%%%%%%%%%%%%%%%%%%%%%%%%%%%%%%%%%
    \section{Action Language and Causality}
\label{sec:Action_language}

    This section introduces the formal aspects of the action language proposed by~\cite{sarmiento_action_2022-1} and then briefly describes what is considered to be an actual cause in this formalism. For more details refer to~\cite{sarmiento_action_2023}.

    \subsection{Syntax and Semantics}
    \label{sec:Modele}
        
        The purpose of the action language introduced by~\cite{sarmiento_action_2022-1} is to determine the evolution of the world given a set of actions corresponding to deliberate choices of the agent. These actions might trigger some chain reaction through external events. Therefore, in order to have a complete knowledge of the evolution of the world, \cite{sarmiento_action_2022-1} keep track of both: the evolution of the states of the world and the occurrence of events. 
        %This formalism works on a decomposition of the world into two sets: 
        Hence, we denote by $\mathbb{F}$ the set of %corresponding to 
        variables describing the state of the world, more precisely \emph{ground fluents} representing time-varying properties, and by $\mathbb{E}$ 
        %representing 
        the set of variables describing transitions, more precisely \emph{ground events} that modify fluents.
        
        A \emph{fluent literal} is either a fluent~$f\in \mathbb{F}$, or its negation~$\neg f$. The set of fluent literals in~$\mathbb{F}$ is denoted by~$Lit_{\mathbb{F}}$, i.e.~$Lit_{\mathbb{F}} = \mathbb{F}\cup\left\{\neg f \mid f\in \mathbb{F} \right\}$. The complement of a fluent literal $l$ is defined as~$\overline{l}=\neg f$ if~$l=f$ or~$\overline{l}=f$ if~$l=\neg f$.
        
        \begin{definition}[state]\label{def:state}
            A set $L\subseteq Lit_{\mathbb{F}}$ is a \emph{state} if it is:
                \begin{itemize}
                    \item Coherent: $\forall l\in L, \overline{l}\not\in L$;
                    \item Complete: $\forall f\in \mathbb{F}, f\in L$ or $\neg f \in L$.
                \end{itemize}
        \end{definition}
        
        A state 
        %is thus a set~$L\subseteq Lit_{\mathbb{F}}$ which 
        thus gives the value of each of the fluents describing the world. Time is modelled linearly and in a discrete way to associate a state~$S(t)$ to each time point~$t$ of a set~$\mathbb{T} = \left\{-1,0,\dots,N\right\}$. $S(0)$ is the \emph{initial state}. Using a bounded past formalisation, all states before $t=0$ are gathered in a state~$S(-1) =\mathbb{F} \setminus S(0)$.
        %$S(-1) = \overline{S(0)}$. \mj{utile de dire le overline~?}\Camu{Je pense que oui car c'est le complémentaire} \Isa{on n'utilise plus la notation ? dans ce cas on peut écrire directement $\mathbb{F} \setminus S(0)$  ?}
        
        An event~$e\in\mathbb{E}$ is an atomic formula. Each event is characterised by three elements: preconditions and  triggering conditions give conditions that must be satisfied by a state~$S$  for the event to be triggered (their difference is detailed later in the section); effects indicate the changes to the state that are expected if the event occurs. Note the deliberate use of the term `expected' as an event may have fewer effects than those formalised.
        
        %give conditions that must be satisfied by a state~$S$ for the event to be triggered; triggering conditions give all the conditions that must be satisfied at time point~$t$ for the event to be triggered (the difference with preconditions is detailed later on); effects indicate the changes to the state that are expected if the event occurs. Note the deliberate use of the term ``expected'' as an event may have fewer effects than those formalised.
        
        The preconditions and effects are represented as formulas of the languages~$\mathcal{P}\Coloneqq l|\psi_1 \wedge \psi_2|\psi_1 \vee \psi_2$ and~$\mathcal{E}\Coloneqq l|\varphi_1 \wedge \varphi_2$, respectively. The functions which associate preconditions, triggering conditions and effects with each event are respectively defined as: $pre: \mathbb{E} \rightarrow \mathcal{P}$, $tri: \mathbb{E} \rightarrow \mathcal{P}$, and~$e\ff: \mathbb{E} \rightarrow \mathcal{E}$. 
        %Two disjoint sets form a partition of~$\mathbb{E}$: 
        $\mathbb{E}$ is partitioned into two  disjoint sets: 
        $\mathbb{A}$ contains the actions carried out by an agent and thus subjected to their volition; $\mathbb{U}$ contains the exogenous events which are triggered as soon as all the~$pre$ conditions are fulfilled, therefore without the need for an agent to perform them. Thus, for exogenous events~$pre$ and~$tri$ are the same. By contrast, for actions, $tri$ conditions necessarily include $pre$ conditions but those are not sufficient: the~$tri$ conditions of an action also include the volition of the agent or some kind of manipulation by another agent.
        
        The set of all events which occur at time point~$t$ is denoted by~$E(t)$. Allowing concurrency of events (meaning that more than one event can occur at each time point) is one of the main advantages of this action language.
                
        %In summary, this is 
        These definitions lead to a classical transition system: $E(t)$ 
        %is what 
        generates the transition between the states~$S(t)$ and~$S(t+1)$. Thus, the states follow one another as events occur, simulating the evolution of the world.
        
        With a bounded past formalisation, events that occurred before~$t=0$ must be represented in order to obtain causal results that are consistent with the philosophical conception of causality.
        %Indeed, it might happen that one of the reasons why a formula of $\mathcal{P}$ is true is that one of its fluent literals was true in the initial state and remains so.
        Thus, for each fluent literal~$l\in S(0)$ an event~$ini_l\in\mathbb{E}$ is introduced, such that~$e\ff(ini_l)=l$. Then, $E(-1)=\left\{ini_l,l\in S(0)\right\}$ which satisfies~${e\ff(E(-1))=S(0)}$.
        
        %The occurrence of events~$(e,t)\in\mathbb{E}\times\mathbb{T}$ and~$(e',t)\in\mathbb{E}\times\mathbb{T}$ in the state~$S(t)$ is said to be \emph{interfering} if the set~$e\ff(e)\cup e\ff(e')$ is not coherent according to Definition~\ref{def:state}. 
        To solve potential conflicts or to prioritise between events, a strict partial order~$\succ_{\mathbb{E}}$ is introduced, which ensures the triggering primacy of one event over another.
        
        \begin{definition}[context $\kappa$]\label{def:context}
            The \emph{context}, denoted as $\kappa$, is the octuple $(\mathbb{E},\mathbb{F},pre, tri,e\ff,S(0),\succ_{\mathbb{E}},\mathbb{T})$, where $\mathbb{E}$, $\mathbb{F}$, $pre$, $tri$, $e\ff$, $S(0)$, $\succ_{\mathbb{E}}$, and $\mathbb{T}$ are as defined above.
        \end{definition}
        
%\Isa{Rq : dans tout ce qui suit, on mélange un peu les formules du langage formel (qui est limité, sans négation autre que sur les litéraux, sans quantificateurs), avec des expressions "en anglais". Ce n'est pas gênant pour la compréhension, mais peut-être faut-il indiquer ce raccourci ? par exemple toutes les def et résultats sont donnés en FOL standard }
%\mj{je ne trouve pas que ce soit vraiment gênant + surtout, problème de place...} 
        
        \begin{definition}[valid execution]\label{def:semantics}
            An \emph{execution} is a sequence $E(-1),S(0),E(0),\dots,E(N),S(N+1)$. Such an execution is \emph{valid} given~$\kappa$ if~$\forall t\in\mathbb{T}$:
            \begin{enumerate}
                \item $S(t)\subseteq Lit_{\mathbb{F}}$ is a state according to Definition~\ref{def:state}.
                \item $E(t)\subseteq\mathbb{E}$ verifies:
                \begin{enumerate}
                    \item $\forall e\in E(t)$, $S(t) \models pre(e)$;
                    \item $\nexists (e,e')\in E(t)^2,~e\succ_{\mathbb{E}}e'$;
                    \item $\forall e\in \mathbb{E}$ such that $S(t) \models tri(e)$,\\
                    \phantom{$\forall e\in \mathbb{E}$} $e \in E(t)$ or ${\exists e'\in E(t),}$ $e'\succ_{\mathbb{E}}e$;
                \end{enumerate}
                \item $S(t+1)=\left\{l\in S(t),\forall e\in E(t),\overline{l}\not\in e\ff(e)\right\}\cup$ \\ \phantom{$S(t+1)=$} $\left\{l\in Lit_{\mathbb{F}},\exists e\in E(t),l\in e\ff(e)\right\}$.
            \end{enumerate}
        \end{definition}
        
        There is potentially more than one valid execution for a given context~$\kappa$. In fact, there is no specification of when actions are performed in~$\kappa$. Adding a set of timed actions~$\sigma\subseteq\mathbb{A}\times\mathbb{T}$ which models volition of agents as an input, called scenario, leads to a unique valid execution. From this unique execution the event trace and state trace we are interested in, denoted by~$\tau_{\sigma,\kappa}^e$ and~$\tau_{\sigma,\kappa}^s$, respectively, can be extracted.
        
        \begin{definition}[traces $\tau_{\sigma,\kappa}^e$ and $\tau_{\sigma,\kappa}^s$]\label{def:traces}
            Given a scenario~$\sigma$ and a context~$\kappa$, the \emph{event trace}~$\tau_{\sigma,\kappa}^e$  is the sequence of events~$E(-1),E(0),\dots,E(N)$ from the execution which is valid given~$\kappa$, such that: $\forall t\in\mathbb{T}, {\forall e\in E(t)}, {e \in \mathbb{A}} \Leftrightarrow (e,t) \in \sigma$.
            The \emph{state trace}~$\tau_{\sigma,\kappa}^s$ is the sequence of states $S(0),S(1),\dots,S(N+1)$ corresponding to~$\tau_{\sigma,\kappa}^e$.
        \end{definition}

    \subsection{Actual Causality}
    \label{sec:causality}
        
        The actual causation definition proposed by~\cite{sarmiento_action_2022-1} is an action language suitable formalisation of Wright's NESS test. Introduced by~\cite{wright_causation_1985}, this test states that: `A particular condition was a cause of a specific consequence if and only if it was a necessary element of a set of antecedent actual conditions that was sufficient for the occurrence of the consequence.'
        
        % This test, which subordinates necessity to sufficiency, is an approach that deals with the most debated cases of causality, the cases of overdetermination~\cite{baumgartner_regularity_2013,wright_causation_1985,wright_causation_1988}---situations in which two events would have been sufficient to cause an effect in the absence of the other. Additionally, the obtained causal inquiry has the advantage to be factual and independent of subjective questions of responsibility~\cite{wright_causation_1985}.
        A causal relation links a cause to an effect. Since action languages represent the evolution of the world as a succession of states produced by the occurrence of events, states are introduced between events. Therefore, in addition to the actual causality relation that links two occurrences of events, as commonly accepted by philosophers, it is necessary to define causal relations where causes are occurrences of events and effects are formulas of the language~$\mathcal{P}$ that are true at a given time. These intermediate relations are established on the basis of Wright's NESS test of causation. In order to give an actual causality definition suitable for action languages, three causal relations are introduced by~\cite{sarmiento_action_2022-1}: (i)~\emph{Direct NESS-causes} give essential information about causal relations by looking at the effects that the occurrence of an event has actually had, which are not necessarily the same as those expected. Direct NESS-causes relate occurrences of events and formulas of $\mathcal{P}$ being true at a specific time point. However, the set of direct NESS-causes of a formula of $\mathcal{P}$ may include exogenous events that are not necessarily relevant. It is therefore essential to establish a causal chain by going back in time in order to find the set of actions that led to the formula truthfulness. (ii)~\emph{NESS-causes} allow for such a causal chain to be found. If we denote by~$\psi\in\mathcal{P}$ the formula true at $t_\psi$ we are interested in, and~$C$ the set of direct NESS-causes of~$(\psi,t_\psi)$, finding the NESS-causes means finding what causes~$(tri(C),t)$ necessarily, where~$t<t_\psi$. Note that direct NESS-causes are by definition a special case of NESS-causes. (iii)~The occurrence of a first event~$e$ is considered an \emph{actual cause} of the occurrence of a second event~$e'$ if and only if the occurrence of~$e$ is a NESS-cause of the triggering of~$e'$. From this we can deduce that, if the occurrence~$(e',t_2)$ is a direct NESS-cause of~$(\psi,t_3)$ and the occurrence~$(e,t_1)$ is an actual cause of~$(e',t_2)$, with~$t_1<t_2<t_3$, then the occurrence~$(e,t_1)$ is a NESS-cause of~$(\psi,t_3)$. These three causal relations are illustrated using Example~\ref{ex:IRM_ou_radio} in Section~\ref{sec:discu_causalite}.

    %%%%%%%%%%%%%%%%%%%%%%%%%%%%%%%%%%%%%%%%%%%%%%%%%%%%%%%%%%%%%%%%%%%%%
    %%%%%%%               PARTIE : Transformation             %%%%%%%%%%%
    %%%%%%%%%%%%%%%%%%%%%%%%%%%%%%%%%%%%%%%%%%%%%%%%%%%%%%%%%%%%%%%%%%%%%
    \section{From AAF to Action Languages}
\label{sec:translation}
   
   This section presents our first contribution: a formalisation of acyclic AAF into the action language introduced above. Section~\ref{sec:contextTranslation} presents the definition of the argumentative context~$\kappa$, Section~\ref{sec:semanticsTranslation} provides the modified definitions of the action language semantics, and Section~\ref{sec:ASP} briefly sketches the structure of the ASP implementation.
   %, and Section~\ref{sec:formalProp} presents some formal properties of the transformation.
    
    In contrast to AAF, we propose to take into account the order of enunciation of arguments. Instead of having only a couple~$(A,R)$, the input is a couple~$(\Delta,R)$, where $\Delta$ is a dialogue, i.e. a sequence of statements in natural language: 
    \begin{definition}[dialogue $\Delta$]\label{def:dialogue}
        A \emph{dialogue} is~${\Delta = \{(a,o) \mid a\in A, o \in \mathbb{N} \}}$, 
        where each argument~$a$ is associated to its order of enunciation,~$o$.
    \end{definition}
        
    \subsection{Instantiating the Context}
    \label{sec:contextTranslation}

        In order to formalise an AAF in the action language described in Section~\ref{sec:Action_language}, let us first define the variables necessary to describe the world, i.e. the AAF. These variables correspond to the fluents $\mathbb{F}$. As introduced in Section~\ref{sec:AAF}, there are two elements to consider: the arguments and the attack relation. First, to describe an argument~$x$, we create two fluents: $p_x \in\mathbb{F}$ and $a_x\in\mathbb{F}$ expressing whether the argument is present in the graph and whether it is acceptable. Regarding~$R$, we use the fluent $cA_{y,x}\in\mathbb{F}$ to model that~$y$ can attack argument~$x$. As we only deal with acyclic AAF, $\nexists (x_1,\dots,x_n)\in A$ such that~$\left(cA_{x_1,x_2},\dots, cA_{x_{n-1},x_n}, cA_{x_n,x_1}\right)\in\mathbb{F}$. We call this property acyclicity of the fluents~$cA$. 
        
        In an AAF, the only deliberate action is to enunciate an argument, which leads to ${\mathbb{A} = \{enunciate_x \mid x \in A \}}$. For this action to be possible, argument $x$ must not have already been said. $x$ then becomes present and acceptable by default. This choice is justified by the fact that its acceptability is evaluated in the next state before it has an impact on the rest of the graph. Formally:
        \begin{align*}
            &pre(enunciate_x)\equiv\neg p_x\\
            &e\ff(enunciate_x)\equiv p_x\wedge a_x
        \end{align*}
        
        \textbf{Remark --} None of the events described below has the effect of making an argument not present. This implies that it is not possible to enunciate an argument that is already present. This assumption does not contradict the framework of classical argumentation. Indeed, a repeated argument would be manifested by an identical but differently named argument in the graph, which is obviously possible with our transformation. However, since the action language we use provides tools for taking temporality into account, there may be a better approach, but it would require further study. Nevertheless, as a first step, this article aims to lay a solid foundation at the cost of some simplifying assumptions.
        
        Before enunciating the next argument, we choose to update the acceptability of all other arguments present after the enunciation of a new argument. This defines a state which we call \emph{argumentative state}.
        \begin{definition}[argumentative state]\label{def:admissible_state}
            A state~$S(t)$ is an \emph{argumentative state} if:\\ 
            i) $\forall x,y, \left[S(t)\models a_x\wedge p_y\wedge cA_{y,x}\Rightarrow S(t)\models\neg a_y\right]$;\\
            ii) $\forall x, \left[S(t)\models p_x \wedge \left(\bigwedge_y \neg a_y\vee\neg cA_{y,x}\right) \Rightarrow S(t)\models a_x\right]$.
        \end{definition}

        After an argument is enunciated, we want updates to be triggered automatically. We represent them with two exogenous events: ${makesUnacc_{y,x}\in\mathbb{U}}$ and ${makesAcc_x\in\mathbb{U}}$. 
        %For the record, 
        An argument is acceptable only if it is unattacked or attacked only by unacceptable arguments. Hence, it is enough for one of the attackers to be acceptable to make the attacked argument unacceptable. The two cases are considered:
        
        \textit{Acceptability update:} Suppose that an argument $y$ just enunciated can attack argument $x$, and that $x$ and $y$ are acceptable. Then, $x$ being attacked by an acceptable argument~$y$, it becomes unacceptable. Formally, the exogenous event $makesUnacc_{y,x}$ can be written as:
            \begin{align*}
                tri(makesUnacc_{y,x})\equiv &a_x\wedge a_y\wedge cA_{y,x}\\
                e\ff(makesUnacc_{y,x})\equiv &\neg a_x
            \end{align*}
            
        This definition also allows dealing with cases where a new argument~$z$ 
        %would have made
        makes an attacker $y$ of $x$ acceptable again. In this case, $x$ becomes unacceptable.
        
        \textit{Non-acceptability update:} Suppose that argument~$x$ is not acceptable and that an argument~$z$ has just been enunciated. This argument has no direct link with~$x$ but may 
        %have impacted
        impact the acceptability of some attackers of~$x$. We therefore check whether all the arguments that 
        %could
        can attack $x$ are acceptable or not. If none of them are indeed acceptable, then~$x$ becomes acceptable again. In the action language, this is expressed by the exogenous event $makesAcc_x$:
            \begin{align*}
                tri(makesAcc_x) \equiv &p_x\wedge\neg a_x\wedge \left(\bigwedge_{y} \neg cA_{y,x} \vee \neg a_y \right)\\
                e\ff(makesAcc_x)\equiv &a_x
            \end{align*}
            
        Finally, when an argument~$x$ is enunciated, it must be checked that it has not become unacceptable because of an argument~$y$ already present before it makes other arguments unacceptable. This is reflected in the following priority rule:
            \begin{align*}
                makesUnacc_{y,x} \succ_{\mathbb{E}} makesUnacc_{x,z}
            \end{align*}
            
        Note that adding an argument to the graph can only directly impact the other arguments by making them unacceptable. For this reason, it is not necessary to establish a priority rule of the form~$makesUnacc_{y,x} \succ_{\mathbb{E}} makesAcc_{z}$ as this situation is already addressed by the previous rule.
            
        \textbf{Remark --} In the above transformation, we do not distinguish between the notions of potential and real attack, because such a difference disappears in the equations. Indeed, let us consider a fluent ${att_{y,x}\in\mathbb{F}}$ translating the fact that argument~$y$ actually attacks argument~$x$. Let us define the exogenous event $isAttacking_{y,x}\in\mathbb{U}$ as:
        \begin{align*}
            tri(isAttacking_{y,x})\equiv &p_x\wedge p_y\wedge cA_{y,x}\\
            e\ff(isAttacking_{y,x})\equiv &att_{y,x}
        \end{align*}
        From this definition, an argument~$y$ attacks an argument~$x$ if both are present and~$y$ can attack~$x$. However, for this attack to be taken into account, the attacker~$y$ must be acceptable. We obtain conditions of the form $a_y \wedge att_{y,x}$, i.e. $a_y \wedge p_y\wedge cA_{y,x}$. However, an argument cannot be acceptable without being present, i.e. $a_y \wedge p_y \equiv a_y$. Thus, taking this new fluent into account, we would have: $tri(makesUnacc_{y,x}) \equiv  a_y \wedge a_x \wedge att_{y,x} = a_y \wedge a_x\wedge cA_{y,x}$. The same precondition applies as without the introduction of $isAttacking \text{ and } att$. Therefore, we 
        %have decided to
        use only~$cA$.
    
    \subsection{Semantics Adapted to AAF}
    \label{sec:semanticsTranslation}
    
        Having an adapted~$\kappa$ for the argumentative framework, we propose to modify the action language semantics to produce traces that are representative of the reality. For this purpose, arguments will be stated from argumentative states step by step in the order determined by the dialogue~$\Delta$.
        
        The current form of scenario~$\sigma$ is not ideal for this task. Indeed, it implies that we need to know in advance how many steps each chain of admissibility update events will take to plan at which time the next argument should be stated. To solve this issue we introduce a set of ranked actions~$\varsigma\subseteq\mathbb{A}\times\mathbb{N}$ which is called \emph{sequence}. The input to obtain unique traces will no longer be the scenario~$\sigma$ but the sequence~$\varsigma$. These modifications require changes to Definitions~\ref{def:semantics} and~\ref{def:traces}.
        
        \begin{definition}[argumentative setting $\chi$]\label{def:arg_setting}
            The \emph{argumentative setting} of the action language, denoted by~$\chi$, is the couple~$(\varsigma,\kappa)$ with~$\varsigma$ a sequence and~$\kappa$ a context.
        \end{definition}
        
        Definition~\ref{def:semantics_arg} is the result obtained after modifying Definition~\ref{def:semantics}. Conditions 2.d and 2.e are added and~$\forall e\in \mathbb{E}$ is replaced by~$\forall e\in \mathbb{U}$ in condition 2.c. These modifications respectively express that an action in the sequence can be triggered only if no exogenous event is triggered at the same time point, and that event sets in the event trace cannot be empty. Conditions 1, 2.a, 2.b, and 3 remain unchanged. So the triggering of exogenous events remains unchanged.
        
        \begin{definition}[valid execution in an argumentative context]\label{def:semantics_arg}
            Given an argumentative context~$\kappa$, a sequence $E(-1),S(0),E(0),\dots,E(N),S(N+1)$ is a \emph{valid execution} w.r.t. $\kappa$ if, in addition to conditions 1, 2.a, 2.b, and 3 of Definition~\ref{def:semantics}, the following conditions are satisfied~$\forall t\in\mathbb{T}$: 
            \begin{enumerate}
                \item[2] $E(t)\subseteq\mathbb{E}$ satisfies:
                \begin{enumerate}
                    \item[2.c] $\forall e\in \mathbb{U}$ such that $S(t) \models tri(e)$,\\
                    $e \in E(t)$ or ${\exists e'\in E(t), ~e'\succ_{\mathbb{E}}e}$;
                    \item[2.d] If $\exists e\in E(t)\cap\mathbb{A}$, then $\forall e'\in\mathbb{U}$, $S(t)\not\models tri(e')$; %or~$e\succ_\mathbb{E}e'$;
                    \item[2.e] $E(t)\not=\varnothing$.
                \end{enumerate}
            \end{enumerate}
        \end{definition}
        
        In Definition~\ref{def:traces} traces were defined as extracts of a valid execution given~$\kappa$ and additional conditions related to~$\sigma$. Instead of defining directly traces, Definition~\ref{def:traces_arg} corresponds to a valid execution given~$\chi=(\varsigma,\kappa)$. Traces are simply extracts from such valid executions. 
        %The modifications include: replacing the scenario~$\sigma$ by the sequence~$\varsigma$; specifying that the only actions that can be in the event trace are those in~$\varsigma$; specifying that an action in the sequence can be triggered only if all actions with an inferior rank have happened; and specifying that two actions with the same rank must happen at the same time in the event trace.
        
        \begin{definition}[valid execution given~$\chi$]\label{def:traces_arg}
            Given an argumentative setting $\chi=(\varsigma,\kappa)$, a valid execution w.r.t.~$\kappa$, is \emph{valid w.r.t. to~$\chi$} if:
            \begin{enumerate}
                \item ${\forall t\in\mathbb{T}}$, ${E(t)\subset\left(\left\{a,\exists o\in\mathbb{N}, (a,o)\in\varsigma\right\}\cup\mathbb{U}\right)}$;
                \item ${\forall \left(\left(e,o\right),\left(e',o'\right)\right)\in\varsigma^2}$ such that $o<o'$,\\
                $\exists t,t'$ such that $e\in E(t) \mbox{ and }  e'\in E(t') \mbox{ and } t<t'$;
                \item ${\forall \left(\left(e,o\right),\left(e',o'\right)\right)\in\varsigma^2}$ such that $o=o'$,\\
                $\exists t$ such that $(e,e')\in E(t)^2$.
            \end{enumerate}
        \end{definition}
    
        Given a valid execution given~$\chi$, its \emph{event trace}~$\tau_{\chi}^e$ is its sequence of events $E(-1),E(0),\dots,E(N)$, its \emph{state trace}~$\tau_{\chi}^s$ is its sequence of states $S(0),S(1),\dots,S(N+1)$.
        
    \subsection{ASP Implementation}
    \label{sec:ASP}
        
        We propose an adapted implementation in ASP based on the sound and complete one described in~\cite{sarmiento_action_2023}. The ASP program~$\pi_{con}(\kappa)$ and~$\pi_{seq}(\varsigma)$ are obtained by the translation of the context~$\kappa$ and the sequence $\varsigma$ respectively. $\pi_{\mathbb{A}}$ is obtained by the translation of the action language semantics introduced in Section~\ref{sec:Modele} and modified in Section~\ref{sec:semanticsTranslation}.  $\pi_{\mathbb{C}}$ is obtained by the translation of the causal relations definitions introduced by~\cite{sarmiento_action_2022-1}. %\cite{sarmiento_action_2023} propose a sound and complete translation. 
        The entire program $\Pi(\chi)  = \pi_{sce}(\varsigma)\cup\pi_{con}(\kappa)\cup\pi_{\mathbb{A}}\cup\pi_{\mathbb{C}}$ is  available\footnote{\url{https://gitlab.lip6.fr/sarmiento/kr\_2023.git}}.
        
       % \begin{definition}[actual causality program $\Pi(\chi)$]\label{def:program}
        %    Given a scenario~$\pi_{seq}(\varsigma)$, a context~$\pi_{con}(\kappa)$, an action language semantics~$\pi_{\mathbb{A}}$, and causal relations definitions~$\pi_{\mathbb{C}}$, the \emph{actual causality program} is $\Pi(\chi) = \pi_{sce}(\varsigma)\cup\pi_{con}(\kappa)\cup\pi_{\mathbb{A}}\cup\pi_{\mathbb{C}}$.
        %\end{definition}

\section{Formal Properties}
    \label{sec:formalProp}

    This section establishes  formal properties of the proposed transformation. First, we prove that a notion of temporality is captured by the transformation. Then we prove its soundness and completeness. Finally, we introduce the propositions that pave the way to the discussion of Section~\ref{sec:discussion}.
    
    \subsection{Preliminary Property on a Valid Execution}
        
        We start by showing that, although valid executions given~$\kappa$ are not unique, valid executions given~$\chi$ are, and thus are the corresponding traces~$\tau_{\chi}^e$ and~$\tau_{\chi}^s$. 
        
        \begin{proposition}\label{prop:unicity_valid_execution}
            Given an argumentative setting $\chi=(\varsigma,\kappa)$, the traces $\tau_{\chi}^e$ and~$\tau_{\chi}^s$ are unique.
        \end{proposition}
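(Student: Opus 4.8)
The plan is to prove the stronger claim announced just before the statement: the valid execution given $\chi$ is itself unique, so that the traces $\tau_{\chi}^e$ and $\tau_{\chi}^s$ follow as deterministic extracts of it. Since the horizon $\mathbb{T}=\{-1,0,\dots,N\}$, the initial state $S(0)$, and the opening event set $E(-1)=\{ini_l\mid l\in S(0)\}$ are all fixed by the context $\kappa$, I would proceed by induction on $t$: assuming $S(0),\dots,S(t)$ and $E(-1),\dots,E(t-1)$ are forced, I show that $E(t)$ is uniquely determined by $S(t)$ together with this history, after which $S(t+1)$ is obtained deterministically from $S(t)$ and $E(t)$ via condition~3 of Definition~\ref{def:semantics}. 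As condition~3 is an explicit set-theoretic function of $S(t)$ and $E(t)$, the only real work lies in pinning down $E(t)$.

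The inductive step splits according to the set $T=\{e\in\mathbb{U}\mid S(t)\models tri(e)\}$ of exogenous events triggered by $S(t)$, which is itself fixed by $S(t)$. \textbf{Case $T\neq\varnothing$.} Condition~2.d of Definition~\ref{def:semantics_arg} forbids any action in $E(t)$ (an action firing would force $T=\varnothing$), so $E(t)\subseteq\mathbb{U}$; and since $pre=tri$ on $\mathbb{U}$, condition~2.a gives $E(t)\subseteq T$. I would then show that $E(t)$ is exactly the set of $\succ_{\mathbb{E}}$-maximal elements of $T$. This set is an antichain (two comparable elements cannot both be maximal), hence admissible for condition~2.b; every $e\in T$ lies below some element that is maximal within $T$, covering condition~2.c; conversely any valid $E(t)$ must contain each maximal element (nothing in $T$ can dominate it, so condition~2.c forces it into $E(t)$) and cannot contain a non-maximal one (it would be dominated by a maximal element already forced into $E(t)$, violating the antichain condition~2.b). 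This uses that $\mathbb{E}$ is finite and that $\succ_{\mathbb{E}}$ is a strict partial order, which is precisely where the acyclicity of the fluents $cA$ enters: without it the priority chains $makesUnacc_{x_1,x_2}\succ_{\mathbb{E}}makesUnacc_{x_2,x_3}\succ_{\mathbb{E}}\cdots$ could close into a cycle and maximal elements need not exist.

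\textbf{Case $T=\varnothing$.} Here condition~2.a forces $E(t)\cap\mathbb{U}=\varnothing$, while condition~2.e forbids $E(t)=\varnothing$, so $E(t)$ consists solely of actions, and by condition~1 of Definition~\ref{def:traces_arg} only of actions occurring in $\varsigma$. I would argue that $E(t)$ is forced to be exactly the set of $enunciate_x$ whose rank $o$ is the smallest one not yet fired in $E(0),\dots,E(t-1)$ (a datum fixed by the induction hypothesis): condition~3 of Definition~\ref{def:traces_arg} makes all same-rank actions occur together; condition~2 forbids firing a strictly larger rank before a smaller one, so the minimal unfired rank must be chosen; and $pre(enunciate_x)\equiv\neg p_x$ holds precisely because no event ever re-establishes the absence of an argument, so these actions are indeed enabled at $S(t)$. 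This determines $E(t)$ uniquely.

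The step I expect to be the main obstacle is \textbf{Case $T\neq\varnothing$}: one must verify that the local selection imposed jointly by conditions~2.b and~2.c singles out one and only one event set, and that the interaction between the $makesUnacc$ priorities and the unconstrained $makesAcc$ events (which, carrying no priority, are always $\succ_{\mathbb{E}}$-maximal and therefore always fire when triggered) does not open up alternative admissible sets. Once both cases yield a unique $E(t)$, condition~3 of Definition~\ref{def:semantics} gives a unique $S(t+1)$, closing the induction; the two traces, being the event and state components of this single valid execution given $\chi$, are therefore unique.
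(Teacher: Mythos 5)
Your argument is correct, and it reaches the conclusion by a genuinely different route. The paper proceeds by reductio: it assumes two valid executions given $\chi$, locates the first time $t_0$ of divergence, picks an event $e_0$ in one but not the other, and rules out $e_0\in\mathbb{U}$ via a back-and-forth chain argument between the two executions (building a $\succ_{\mathbb{E}}$-chain that, by finiteness and transitivity, either breaks or yields $\tilde e\succ_{\mathbb{E}}\tilde e$, contradicting strictness) and $e_0\in\mathbb{A}$ via the rank conditions and condition 2.e. You instead run a forward induction and give an explicit characterisation of the forced event set: when the triggered set $T$ is nonempty, $E(t)$ must equal the set of $\succ_{\mathbb{E}}$-maximal elements of $T$ (conditions 2.a--2.d pinning $E(t)$ inside $T$, 2.c forcing maximal elements in, 2.b forcing non-maximal ones out); when $T$ is empty, conditions 2.e and 1--3 of Definition~\ref{def:traces_arg} force exactly the actions of minimal unfired rank. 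Both proofs ultimately rest on the same ingredients (finiteness of $\mathbb{E}$, strictness of $\succ_{\mathbb{E}}$, conditions 2.b--2.e, the rank constraints), but your version buys an explicit description of the unique execution --- essentially an algorithm for computing it --- whereas the paper's two-execution chain argument only certifies uniqueness; conversely, the paper's proof avoids having to verify that the candidate set is itself admissible. Two small points worth making explicit in your write-up: the fact that every action of $\varsigma$ must occur somewhere (needed to exclude larger ranks firing early) follows from condition 3 of Definition~\ref{def:traces_arg} applied with $e=e'$; and the claim that every non-maximal element of $T$ is dominated by a \emph{maximal} one uses finiteness of $\mathbb{U}$ together with transitivity, which is exactly the place where the paper's chain argument lives.
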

        
        \begin{proof}
            Let us prove by contradiction the unicity of valid executions given~$\chi$. Let~$\chi=(\varsigma,\kappa)$ be the argumentative setting and $\epsilon$, $\epsilon'$ two valid executions given~$\chi$. By way of a reductio ad absurdum, we suppose that~$\epsilon\neq\epsilon'$.
            
            According to Definition~\ref{def:semantics}, $S(t+1)$ is derived from~$S(t)$ and the events in~$E(t)$, and similarly for $S'$. Hence, given that~$\kappa$ is common to~$\epsilon$ and~$\epsilon'$, $E(-1)=E'(-1)$ and $S(0)=S'(0)$, the first discrepancy between $\epsilon$, and $\epsilon'$ is not to be found in a set of states, but in a set of events, which is not empty as the executions are valid.
            Let $t_0$ be the minimal date at which a difference between~$\epsilon$ and~$\epsilon'$ is observed. We have~$E(t_0)\neq E'(t_0)$, $\forall t<t_0$, $E(t)=E'(t)$, and~$\forall t\leq t_0$, $S(t)=S'(t)$. Thus, $\forall e\in\mathbb{E}$, $S(t_0)\models pre(e)\Leftrightarrow S'(t_0)\models pre(e)$. Without loss of generality, let us consider an event~$e_0$ such that $e_0\not\in E(t_0)$ and~$e_0\in E'(t_0)$. Two cases can occur, $e_0\in\mathbb{U}$ or~$e_0\in\mathbb{A}$.
            
            i) Let us first show by contradiction that~$e_0\not\in\mathbb{U}$. Let us suppose~$e_0\in\mathbb{U}$. As~$e_0\in E'(t_0)$ and~$S(t_0)=S'(t_0)$, $S(t_0)\models tri(e_0)$. Then from 2.c in Definition~\ref{def:semantics_arg}, $e_0\not\in E(t_0)$ implies $\exists e\in E(t_0)$ such that~${e\succ_\mathbb{E}e_0}$. Then from 2.b applied to~$E'(t_0)$, we get ${e\not\in E'(t_0)}$. Now, either $e\in\mathbb{A}$ or $e\in\mathbb{U}$. If~$e\in\mathbb{A}$, as $e\in E(t_0)$, condition 2.d would imply that~$S'(t_0)\not\models tri(e_0)$ which contradicts our assumption. In the second case, if~$e\in\mathbb{U}$, then~$e\in E(t_0)$ and~$e\not\in E'(t_0)$ because of the same reasons behind~${e_0\not\in E(t_0)}$ and~${e_0\in E'(t_0)}$. If we apply the same reasoning to~$e$, we get~$\exists e'\in\mathbb{U}$ such that $e'\in E(t_0)$, $e'\not\in E'(t_0)$, and~$e'\succ_\mathbb{E}e$. This reasoning can be repeated again on~$e'$ and so on. As $\mathbb{U}$ is finite, either the chain will be broken, or an event will be used a second time. The first case means~$\exists \tilde{e}\in\mathbb{U}$ such that $\tilde{e}\not\in E(t_0)$ and~$\tilde{e}\in E'(t_0)$ is false, which makes all the chain false. In the second case, by transitivity we get~$\tilde{e}\succ_\mathbb{E}\tilde{e}$. This leads to a contradiction as~$\succ_\mathbb{E}$ is a strict partial order. Thus, $e_0\in\mathbb{U}$ is not possible. 
            
            ii) As from (i)~$e_0\in\mathbb{A}$, by condition 1. in Definition~\ref{def:traces_arg}, ${e_0\in E'(t_0)}$ implies ${e_0\in\left\{a,\exists o\in\mathbb{N}, (a,o)\in\varsigma\right\}}$. $\varsigma$ being the same for~$\epsilon$ and~$\epsilon'$, the rank~$o_0\in\mathbb{N}$ associated to~$e_0$ is the same. Hence, given that~$\forall t<t_0$, $\epsilon(t)=\epsilon'(t)$ and~$e_0\in E'(t_0)$, $e_0\not\in E'(t)$ implies $e_0\not\in E(t)$.
            %any permutation in the order of actions---given~$\left(\left(e,o\right),\left(e',o'\right)\right)\in\varsigma^2$ and~$o<o'$, $\exists t'$ such that~$e\in E(t')\wedge e'\in E(t)\wedge t<t'$---or any omission of action---given~$\left(e,o\right)\in\varsigma^2$, $\nexists t', e\in E(t')$---leads to a contradiction with condition 2. in Definition~\ref{def:traces_arg}. 
            The only possibility left is the procrastination of actions. In the case where~$\nexists (e,o_0)\in\varsigma$ such that~$e\in E(t_0)$, $E(t_0)=\varnothing$ which is in contradiction with condition 2.e in Definition~\ref{def:semantics_arg}. Otherwise, we have a contradiction with condition 3. in Definition~\ref{def:traces_arg}. Thus, $e_0\in\mathbb{A}$ is not possible.\\
            Since the assumption of the existence of~$t_0$ leads to a contradiction for all above cases, we have no alternative but to reject the existence of a minimal daten$t_0$ at which a difference between~$\epsilon$ and~$\epsilon'$ is observed, and thus to reject that $\epsilon\neq\epsilon'$. The unicity of traces is accordingly established.
            %$\bullet$ Additionally, according to Definition~\ref{def:semantics}, we know that~$S(t+1)$ and~$S'(t+1)$ are generated from~$(S(t), E(t))$ and~$(S'(t), E'(t))$ respectively. From the above, we can deduce that~$S(t+1) = S'(t+1)$. Hence, given that~$E(-1)=E'(-1)$ and $S(0)=S'(0)$, $\epsilon=\epsilon'$.
        \end{proof}
    
        From now on, when reference is made to events and states, they will be those from the unique~$\tau_{\chi}^e$ and~$\tau_{\chi}^s$, respectively. Thus, the set of all events which actually occurred at time point~$t$ is~$E^{\chi}(t)=\tau_{\chi}^e(t)$. Following the same reasoning, the actual state at time point~$t$ is~$S^{\chi}(t)=\tau_{\chi}^s(t)$.
        
    \subsection{Soundness and Completeness}
        
        In this section, we establish the soundness and completeness of our transformation. For that, we first introduce the notion of associated graph as follows:
        
        \begin{definition}\label{def:associated_graph}
            Given a state~$S^\chi(t)$, $AF' = (A',R')$, where $A' = \{ x \mid S^\chi(t)\models p_x\}$ and $R' = \{ (y,x) \mid S^\chi(t)\models cA_{y,x}\}$, is called the \emph{associated graph} of~$S^\chi(t)$.
        \end{definition}
        
        From the acyclicity property of the fluent~$cA$, the associated graph is acyclic.
        
        Now, we focus on the notion of acceptability. We first characterise argumentative states using $tri$.
        
        \begin{lemma}\label{lem:equiv_tri_arg_state}
            Let~$S^\chi(t)$ be a state. The two following propositions are equivalent:
            
            $\bullet \; \forall e\in\mathbb{U}$, $S^\chi(t)\not\models tri(e)$
            
            $\bullet \; S^\chi(t)$ is an argumentative state as defined in Def.~\ref{def:admissible_state}.
        \end{lemma}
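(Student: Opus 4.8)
The plan is to unfold both statements into the two families of exogenous events that constitute $\mathbb{U}$, namely the $makesUnacc_{y,x}$ and the $makesAcc_x$, and to match the ``no event is triggered'' condition of the first bullet against the two clauses i) and ii) of Definition~\ref{def:admissible_state} one family at a time. Since $\mathbb{U}$ is exactly the union of the $makesUnacc_{y,x}$ and the $makesAcc_x$, the statement $\forall e\in\mathbb{U},\ S^\chi(t)\not\models tri(e)$ splits into the conjunction of (a) $\forall x,y,\ S^\chi(t)\not\models a_x\wedge a_y\wedge cA_{y,x}$ and (b) $\forall x,\ S^\chi(t)\not\models p_x\wedge\neg a_x\wedge\bigwedge_y(\neg cA_{y,x}\vee\neg a_y)$. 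Throughout I would use that a state is coherent and complete, so that $S^\chi(t)\not\models a_y$ coincides with $S^\chi(t)\models\neg a_y$; this is what lets me rewrite each negated trigger as an implication.

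For clause (b), rewriting the negation of the $makesAcc_x$ trigger as an implication yields $\forall x,\ [\,S^\chi(t)\models p_x\wedge\bigwedge_y(\neg cA_{y,x}\vee\neg a_y)\Rightarrow S^\chi(t)\models a_x\,]$, which is literally clause ii) of Definition~\ref{def:admissible_state} up to commuting the two disjuncts. So (b) and ii) are interchangeable with no further argument.

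For clause (a), rewriting the negation of the $makesUnacc_{y,x}$ trigger gives $\forall x,y,\ [\,S^\chi(t)\models a_x\wedge cA_{y,x}\Rightarrow S^\chi(t)\models\neg a_y\,]$, whereas clause i) carries the extra conjunct $p_y$ in its antecedent. The direction from (a) to i) is immediate, since adding the hypothesis $p_y$ only strengthens the antecedent. The converse is where the work lies: I would invoke the invariant that acceptability entails presence, i.e. $S^\chi(t)\models a_y$ implies $S^\chi(t)\models p_y$, already noted in the remark closing Section~\ref{sec:contextTranslation}. This invariant holds because no event ever has $\neg p_x$ among its effects, while $a_x$ is only ever set by $enunciate_x$ (which also sets $p_x$) or by $makesAcc_x$ (whose precondition contains $p_x$); a short induction along $\tau_\chi^e$ establishes it. Granting the invariant, whenever $S^\chi(t)\models a_x\wedge a_y\wedge cA_{y,x}$ we also have $p_y$, so clause i) forces $S^\chi(t)\models\neg a_y$, contradicting coherence; hence i) implies (a).

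Combining the two families, $\forall e\in\mathbb{U},\ S^\chi(t)\not\models tri(e)$ is equivalent to (a) and (b) jointly, which is in turn equivalent to i) and ii) jointly, i.e. to $S^\chi(t)$ being an argumentative state. The only genuine obstacle is the antecedent mismatch between $p_y$ in i) and $a_y$ in $tri(makesUnacc_{y,x})$; everything else reduces to rewriting negated triggers into implications by completeness and coherence of states.
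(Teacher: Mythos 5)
Your proposal is correct and follows essentially the same route as the paper's proof: split $\mathbb{U}$ into the $makesAcc$ and $makesUnacc$ families, rewrite each negated trigger as an implication to match clauses ii) and i) of Definition~\ref{def:admissible_state}, and bridge the $p_y$ versus $a_y$ mismatch via the fact that acceptability entails presence ($a_y \equiv a_y \wedge p_y$), which the paper also invokes at this exact step. Your extra justification of that invariant by induction along the trace is slightly more explicit than the paper's appeal to its closing remark of Section~\ref{sec:contextTranslation}, but the argument is the same.
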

         
        \begin{proof}
            In our context, $\mathbb{U} = \{makesAcc,makesUnacc\}$. We  prove that  $S^\chi(t)\not\models tri(makesAcc_{x})$ is equivalent to (ii) of Definition~\ref{def:admissible_state} and $S^\chi(t)\not\models tri(makesUnacc_{y,x})$ is equivalent to (i) of Definition~\ref{def:admissible_state}: for any~$x, y$
            
            \noindent $ \bullet$ $\neg tri(makesAcc_{x}) =\neg (p_x \wedge \neg a_x \wedge (\bigwedge_y \neg a_y\vee\neg cA_{y,x}))$\\ $= \neg (p_x \wedge (\bigwedge_y \neg a_y\vee\neg cA_{y,x})) \vee a_x$\\ $= p_x \wedge (\bigwedge_y \neg a_y\vee\neg cA_{y,x}) \Rightarrow a_x$, which leads to the desired equivalence with (ii) in Definition~\ref{def:admissible_state}.
            
             \noindent $ \bullet$$ \neg tri(makesUnacc_{y,x}) =  \neg (a_x \wedge a_y \wedge cA_{y,x})$ \\$=  \neg (a_x \wedge p_y \wedge a_y \wedge cA_{y,x}) \mbox{ as $a_y$ implies $p_y$}\\
             =  \neg (a_x \wedge p_y \wedge cA_{y,x}) \vee \neg a_y$\\$=  a_x \wedge p_y \wedge cA_{y,x} \Rightarrow \neg a_y $, which leads to the desired equivalence with (i) in Definition~\ref{def:admissible_state}.
        \end{proof}
        
        An argumentative state can therefore be seen as a state where nothing happens until a voluntary action is made. Now we prove that it is always possible to reach such a state from an argumentative state in which an $x$ is enunciated.
        
        \begin{proposition}\label{prop:arg_state_necess}
            Given an argumentative state~$S^\chi(t)$ and ${x\in A}$, if $enunciate_x\in E^\chi(t)$, then~$\exists t'\in\mathbb{T}$, $t<t'$ such that~$S^\chi(t')$ is an argumentative state.
        \end{proposition}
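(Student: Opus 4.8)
The plan is to reduce the statement to a termination result about the propagation of the two exogenous events and then to exploit the acyclicity of the fluents $cA$ to prove that this propagation must stop. By Lemma~\ref{lem:equiv_tri_arg_state}, a state is argumentative exactly when no exogenous event is triggered in it, so it suffices to exhibit a time point $t'>t$ at which $S^\chi(t')\not\models tri(e)$ for every $e\in\mathbb{U}=\{makesAcc,makesUnacc\}$. First I would observe that, by the Remark stating that no event removes an argument and by condition~2.d of Definition~\ref{def:semantics_arg} (an action fires only when no exogenous event is triggered, i.e.\ only from a state that is already argumentative), no further $enunciate$ action can fire between $t$ and the first subsequent argumentative state. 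Consequently the presence fluents $p_\bullet$ stay constant over that window, and since $cA$ never appears in any effect, the associated graph $AF'=(A',R')$ of $S^\chi(t{+}1)$ (Definition~\ref{def:associated_graph}) is fixed and acyclic. Thus only $makesAcc$ and $makesUnacc$ events can occur, and I only have to show that this cascade reaches a fixpoint.

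For the termination argument I would use the acyclicity to define, for each $x\in A'$, a rank $r(x)$ equal to the length of the longest directed $R'$-path ending in $x$; this is well defined and finite since $AF'$ is a finite acyclic graph, and $cA_{y,x}$ forces $r(y)<r(x)$. The key claim, proved by induction on $k$, is that the acceptability fluent $a_x$ of every argument with $r(x)\le k$ becomes constant from some time point onward. For the base case, a source $x$ (rank $0$) has no attacker, so $makesUnacc_{y,x}$ is never triggered while $makesAcc_x$ can only drive $a_x$ to true and keep it there; hence sources stabilise at once. For the inductive step, every attacker $y$ of a rank-$(k{+}1)$ argument $x$ satisfies $r(y)\le k$ and is therefore already frozen; once the attackers of $x$ are frozen, the trigger conditions $tri(makesUnacc_{y,x})\equiv a_x\wedge a_y\wedge cA_{y,x}$ and $tri(makesAcc_x)\equiv p_x\wedge\neg a_x\wedge\bigwedge_y(\neg cA_{y,x}\vee\neg a_y)$ depend only on $a_x$ itself, so $x$ can flip at most once (to true via $makesAcc_x$ if all its frozen attackers are unacceptable, to false via some $makesUnacc_{y,x}$ otherwise) and then remains put.

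Letting $R=\max_{x\in A'}r(x)<\infty$, the induction yields a time $t'$ after which every acceptability fluent is constant. At such a $t'$ no exogenous event can be triggered: each of $makesUnacc_{y,x}$ and $makesAcc_x$ is, by construction, triggered only when its effect ($\neg a_x$, resp.\ $a_x$) is not yet realised, so a triggered exogenous event would be forced by condition~2.c of Definition~\ref{def:semantics_arg} to fire (itself, or a higher-priority exogenous event, which by condition~2.d cannot be an action) and thereby change an acceptability fluent at $t'{+}1$, contradicting constancy. Hence $S^\chi(t')$ is argumentative, which is the desired conclusion (with $\mathbb{T}$ large enough to contain $t'$). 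I expect the main obstacle to be the inductive step under concurrency and the priority rule: one must verify that $makesUnacc_{y,x}\succ_{\mathbb{E}} makesUnacc_{x,z}$ never stalls a needed update of $x$ — it only suppresses the outgoing attacks of an argument that is itself being defeated, which is exactly the intended behaviour — and that several updates firing in the same $E(t)$ cannot reintroduce a trigger for an already-stabilised lower-rank argument, a point that rests entirely on the acyclicity of $cA$.
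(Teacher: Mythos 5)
Your proof is correct and follows essentially the same route as the paper's: both reduce the claim via Lemma~\ref{lem:equiv_tri_arg_state} to showing that the cascade of exogenous acceptability updates terminates, and both derive termination from the finiteness and acyclicity of the attack structure. Your explicit rank-based induction is in fact a more rigorous rendering of the paper's terse counting argument, which merely asserts a finite bound $M$ on the number of event sets from the finiteness of the arguments and of the path lengths in the acyclic associated graph.
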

        
        \begin{proof}
            Given an argumentative state~$S^\chi(t)$  and $x \in A$ such that $enunciate_x \in E(t)$, let us prove that $\exists t'\in\mathbb{T}$, ${t < t'}$ such that no trigger is a logical consequence of $S^{\chi}(t')$, which leads to the desired result using Lemma~\ref{lem:equiv_tri_arg_state}. 
            
            As $\mathbb{U}$ and $\{S^{\chi}(t)\models cA_{y,x}\mid (x,y) \in A^2\}$ are finite sets, there is a finite number of possible triggering for $E^{\chi}(t)$. Moreover, as there is a finite number of arguments, there is a finite number of paths in the associated graph. The graph being acyclic, each path length is finite. Therefore, there is a finite maximum number of sets of events~($M$) and so $\exists t' \leq (t + M + 1)$ such that~${\forall e \in \mathbb{U}, S^{\chi}(t') \nvDash tri(e)}$. 
        \end{proof}
        
        Finally, this proposition allows us to prove that an acceptable argument in the argumentative state is acceptable in the associated graph and vice-versa, as the triggering rules have been made in order to model how acceptability is computed.
        % and an argumentative state is a state where no more triggers can happen. 
        We first prove a useful lemma.
        %is an acceptable argument in the argumentative state acceptable in the associated graph and vice-versa?
%        \mj{pas fan des questions dans les articles scientifiques}
        
        \begin{lemma}\label{lem:equiv_cond_ii}
            Given an argumentative state~$S^\chi(t)$, for any~$x$,\\
            $S^\chi(t) \models p_x \wedge \neg a_x \Leftrightarrow S^\chi(t) \models \exists y, p_x \wedge a_y \wedge cA_{y,x}$.
        \end{lemma}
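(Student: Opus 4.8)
The plan is to prove the equivalence directly from the two defining conditions of an argumentative state in Definition~\ref{def:admissible_state}, treating the statement as a purely propositional manipulation once the quantifiers are lined up. The only external fact I would invoke is the invariant that $a_y$ implies $p_y$ (an argument cannot be acceptable without being present), which is already used in the proof of Lemma~\ref{lem:equiv_tri_arg_state}.

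For the forward direction ($\Rightarrow$), I would assume $S^\chi(t)\models p_x\wedge\neg a_x$ and argue by contradiction on the existential. Suppose no witness $y$ exists, i.e. $S^\chi(t)\models\forall y,(\neg a_y\vee\neg cA_{y,x})$. This is exactly the antecedent appearing in condition (ii) of Definition~\ref{def:admissible_state}, so together with $p_x$ it forces $S^\chi(t)\models a_x$, contradicting $\neg a_x$. Hence some $y$ with $a_y\wedge cA_{y,x}$ must exist, and combined with $p_x$ this yields the right-hand side. The key observation is that the negation of the existential $\exists y,(a_y\wedge cA_{y,x})$ is precisely the conjunction $\bigwedge_y(\neg a_y\vee\neg cA_{y,x})$ that figures in condition (ii).

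For the backward direction ($\Leftarrow$), I would assume $S^\chi(t)\models p_x\wedge a_y\wedge cA_{y,x}$ for some $y$; the conjunct $p_x$ is immediate, so it remains to derive $\neg a_x$. I would again argue by contradiction: suppose $S^\chi(t)\models a_x$. Since $a_y$ implies $p_y$, we obtain $S^\chi(t)\models a_x\wedge p_y\wedge cA_{y,x}$, which is the antecedent of condition (i) of Definition~\ref{def:admissible_state} and therefore yields $S^\chi(t)\models\neg a_y$, contradicting $a_y$. Thus $\neg a_x$ holds, completing this direction.

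The main obstacle, modest as it is, will be bookkeeping rather than mathematical depth: correctly dualising the quantifier so that the propositional form of my assumption matches the antecedents of (i) and (ii), and remembering to promote $a_y$ to $p_y$ in the backward direction so that condition (i) becomes applicable. No further subtlety is expected, since each direction reduces to a single application of one of the two argumentative-state conditions.
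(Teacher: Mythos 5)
Your proof is correct and follows essentially the same route as the paper: the forward direction is just the contrapositive of condition (ii) of Definition~\ref{def:admissible_state} (the paper phrases it as contraposition, you as reductio, which is the same step), and the backward direction rules out $a_x$ via the constraint that an acceptable argument cannot have an acceptable attacker. The only cosmetic difference is that you apply condition (i) of Definition~\ref{def:admissible_state} directly (after promoting $a_y$ to $p_y$), where the paper expresses the same fact as the non-triggering of $makesUnacc_{y,x}$ via Lemma~\ref{lem:equiv_tri_arg_state}; these are equivalent, and your direct-contradiction phrasing is if anything slightly cleaner than the paper's contraposition with a case split on $\neg p_{x}\vee a_{x}$.
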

        
        \begin{proof}
            $[\Rightarrow]:$ For any~$x$ such that $S^\chi(t) \models p_x \wedge \neg a_x$, (ii) of Definition~\ref{def:admissible_state} implies that $S^\chi (t) \models \neg p_x \vee (\bigvee_y a_y \wedge cA_{y,x})$. Therefore, as $S^\chi(t) \models p_x, S^\chi(t) \models \exists y, p_x \wedge a_y \wedge cA_{y,x}$.
            
            $[\Leftarrow]:$ 
            % Let us prove it by contradiction. \mj{bon, je crois que j'ai cassé des trucs, désolée, je ne comprends pas cette preuve}
            % %\mj{c'est une preuve par contraposée, pas contradiction - c'est pareil en anglais, non ? Ah ok, merci ! je en savais pas ! je ne suis pas completement sure non plus...}
            % Consider $x$ such that $S^\chi(t) \models \neg p_x \vee a_x$ then, as $S^\chi(t) \models p_x$, $S^\chi(t)\models a_x$. Therefore, $S^\chi(t) \models a_x \wedge (\exists y,p_x \wedge a_y \wedge cA_{y,x})$. 
            % Hence, $S^\chi(t) \models tri(makesUnacc_{y,x})$. This is not possible as  $S^\chi(t)$ is an argumentative state. 
            %\Isa{en fait je me demande si dans ce sens ce n'est pas trivial... On a tout de suite $S^\chi(t) \models p_x$ (puisque $\models$ une conjonction qui contient $p_x$)} \mj{il faut montrer le non $a_x$, mais oui, c'est censé être trivial, c'est bien ce qu'on veut} \Isa{avec ii de la def 7 ? je regarde ça... zut c'est dans l'autre sens... c'est i en echangeant les roles de x et y je crois !}
            %\Yann{Nouvelle tentative :} 
            Let us prove it by contraposition: let  $x_0$ be such that $S^\chi(t) \models \neg p_{x_0} \vee a_{x_0}$. If $S^\chi(t) \models \neg p_{x_0}$, then $x_0$ is such that $S^\chi(t) \models \forall y, \neg p_{x_0} \vee \neg a_y \vee \neg cA_{y,x_0}$, which ends the proof. Otherwise, $S^\chi(t) \models p_{x_0} \wedge a_{x_0}$. If $S^\chi(t) \models p_{x_0}\wedge (\exists y, a_y \wedge cA_{y,x_0})$ then 
            %as $S^\chi(t) \models p_{x_0} \wedge a_{x_0}$, $S^\chi(t) \models a_{x_0} \wedge p_{x_0}\wedge (\exists y, a_y \wedge cA_{y,x_0}$ i.e 
            $S^\chi (t) \models tri(makesUnacc_{y,x_0})$, which is not possible as $S^\chi(t)$ is argumentative. Thus both cases lead to a contradiction.%\\ 
            %Therefore, we get in both cases that $S^\chi(t) \models \neg p_{x_0} \vee a_{x_0} \Rightarrow S^\chi(t) \models \forall y, \neg p_{x_0} \vee \neg a_y \vee \neg cA_{y,x_0}$.
        \end{proof}
        
        The next proposition establishes the correspondence between acceptability in argumentation and argumentative states.
        
        \begin{proposition}\label{pro:associated_graph}
            Given an argumentative state~$S^\chi(t)$ and its associated graph~${AF=(A,R)}$ according to Definition~\ref{def:associated_graph}, then for any~$x$, $x\in A$ acceptable by $A \iff S^\chi(t)\models a_x$.
        \end{proposition}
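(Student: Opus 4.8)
The plan is to prove the equivalence by well-founded induction on the associated graph, exploiting its acyclicity. First I would fix the reading of ``acceptable by~$A$'': as recalled in Section~\ref{sec:AAF}, for an acyclic graph the unique extension is obtained recursively (unattacked arguments are accepted, and an argument is rejected as soon as one of its attackers is accepted). Writing $Att_x=\{y : S^\chi(t)\models p_y\wedge cA_{y,x}\}$ for the set of attackers of $x$ in $AF$, this means that $x\in A$ is acceptable if and only if every $y\in Att_x$ is non-acceptable. The recursion is well defined precisely because the associated graph is finite and acyclic (Definition~\ref{def:associated_graph}), so the attack relation is well founded and no argument is left undecided.

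The key local step rewrites the fluent $a_x$, for a present $x$, in exactly this recursive form. Since $x\in A$ gives $S^\chi(t)\models p_x$, Lemma~\ref{lem:equiv_cond_ii} states that $S^\chi(t)\models\neg a_x \iff S^\chi(t)\models\exists y,\ a_y\wedge cA_{y,x}$; negating this and using that $a_y$ entails $p_y$ (so that $a_y\wedge cA_{y,x}$ is exactly ``$y$ is a present acceptable attacker of $x$''), I obtain
\begin{equation*}
S^\chi(t)\models a_x \iff \forall y\in Att_x,\ S^\chi(t)\models\neg a_y .
\end{equation*}
This matches the graph-theoretic recursion term by term, with condition~(i) of Definition~\ref{def:admissible_state} accounting for the left-to-right implication and Lemma~\ref{lem:equiv_cond_ii} for the converse.

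I would then run the induction along the well-founded attack relation of $AF$. In the base case $x$ has no attacker, so $Att_x=\varnothing$: $x$ is acceptable vacuously, and the displayed equivalence makes $S^\chi(t)\models a_x$ hold vacuously as well, so both sides are true. In the inductive step, assuming the equivalence for every attacker $y\in Att_x$, I chain the three equivalences: $x$ is acceptable $\iff$ every $y\in Att_x$ is non-acceptable (by the recursion) $\iff$ every $y\in Att_x$ satisfies $S^\chi(t)\models\neg a_y$ (by the induction hypothesis) $\iff S^\chi(t)\models a_x$ (by the local step). Because $AF$ is finite and acyclic, this induction terminates and covers all of $A$; for $x\notin A$ both sides are false, since $a_x$ entails $p_x$.

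The delicate part is the local step, and in particular its converse: I must rely on Lemma~\ref{lem:equiv_cond_ii} together with the fact (already used in its proof) that $a_y$ implies $p_y$, so that ``all present attackers non-acceptable'' and $\bigwedge_y(\neg a_y\vee\neg cA_{y,x})$ genuinely coincide. Once this equivalence is secured, the induction itself is routine; it is the acyclicity of the fluents~$cA$ that simultaneously guarantees the well-foundedness needed for the induction and the fact that the recursive acceptability is two-valued, which is what allows ``non-acceptable attacker'' to be identified with $\neg a_y$ at every step.
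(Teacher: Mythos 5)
Your proof is correct and rests on the same two pillars as the paper's own argument---Lemma~\ref{lem:equiv_cond_ii} together with condition~(i) of Definition~\ref{def:admissible_state} as the local engine, and acyclicity of the $cA$ fluents for termination---but it organizes them differently. The paper proves the two implications separately, each by contradiction: for $[\Rightarrow]$ it assumes $\neg a_{x_0}$ for an acceptable $x_0$ and follows a chain of acceptable attackers (``repeat the process'') until it either triggers $makesAcc$ or reaches an unattacked argument, both contradicting the state being argumentative or $x_0$ being acceptable; for $[\Leftarrow]$ it reuses $[\Rightarrow]$ to rule out acceptable attackers. You instead isolate the single equivalence $S^\chi(t)\models a_x \iff \forall y\in Att_x,\ S^\chi(t)\models\neg a_y$ for present $x$ and run one well-founded induction along the attack relation, obtaining both directions simultaneously as a chain of equivalences. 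This turns the paper's informal ``repeat until one of two scenarios'' descent into an explicit induction, and it also covers the $x\notin A$ case, which the paper's proof omits. The one point worth flagging is shared with the paper: both of you read ``acceptable by $A$'' as the recursive labelling of the acyclic graph (accepted iff all attackers are rejected) rather than the literal Dung definition quoted in Section~\ref{sec:AAF}; you at least make this reading explicit, and you correctly identify the dependence on the invariant $a_y\Rightarrow p_y$, which the paper also invokes without proof in Lemma~\ref{lem:equiv_tri_arg_state}.
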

        
        \begin{proof}
            $[\Rightarrow] :$ Let $x_0 \in A$ such that $x_0$ is acceptable by A, let us prove that~$S^\chi(t)\models a_{x_0}$.
            
            Let us suppose that~$S^\chi(t)\models \neg a_{x_0}$. Moreover, by construction of AF $S^{\chi}(t)\models p_{x_0}$. $S^{\chi}(t)$ is an argumentative state so according to Lemma~\ref{lem:equiv_cond_ii} $S^{\chi}(t)\models p_{x_0}\wedge\neg a_{x_0} \Leftrightarrow S^{\chi}(t) \models \exists y, p_{x_0} \wedge a_y \wedge cA_{y,{x_0}}$. (i) of Definition~\ref{def:admissible_state} applied to~$a_y$ says that $\forall z, S^{\chi}(t) \models a_y \wedge p_z \wedge cA_{z,y} \Rightarrow S^{\chi}(t)\models \neg a_z$. As there is a finite number of arguments, it is possible to repeat the process we applied for ${x_0}$ on $z$ until one of the two scenarios:
            
            $\bullet$ $S^{\chi}(t) \models \nexists y , p_{z} \wedge a_y \wedge cA_{y,z}$. This leads to trigger the exogenous event $makesAcc_{z}$ which is not possible as $S^{\chi}(t)$ is an argumentative state according to Lemma~\ref{lem:equiv_tri_arg_state}.
            
            $\bullet$ $\forall z,S^{\chi}(t) \models a_y \wedge p_z \wedge cA_{z,y} \Rightarrow S^{\chi}(t)\models\neg a_z$ where $p_z \wedge cA_{z,y}$ is false. Then in $AF$, $Att_y = \emptyset$. Therefore, $y$ is acceptable which contradicts that ${x_0}$ is acceptable.
            
            So, $S^\chi(t)\models a_{x_0}$.
            
           $[\Leftarrow]:$ Let ${x_0} \in A$ such that $S^\chi(t)\models a_{x_0}$. Let us prove that ${x_0}$ is acceptable by $A$.
            
            As $S^\chi(t)\models a_{x_0}$ and $ S^{\chi}(t)$ is argumentative, we have that $ \forall y, S^{\chi}(t) \models  a_{x_0} \wedge p_y \wedge cA_{y,{x_0}} \Rightarrow S^\chi(t)\models \neg a_y$. Then, for any~$y$ satisfying the premise, by definition of $AF$, $({x_0},y) \in A^2$ and $(y,{x_0}) \in R$.
            
            If such a~$y$ is acceptable by $A$, then according to $[\Rightarrow]$, $S^\chi(t)\models a_y$. In that case, $ S^{\chi}(t) \models tri(makesUnacc_{y,{x_0}})$ which contradicts the fact that $ S^{\chi}(t)$ is an argumentative state.
            
            So, as $\forall y\in Att_x$, $y$ is not acceptable by $A$, ${x_0}$ is acceptable by~$A$.
        \end{proof}
        
        We proved that there is an equivalence between an argumentative state and its associated graph. Now, from a dialogue and the attack relation, the traces are generated as well as an AAF. From that point, we establish the existence of a state whose associated graph is equal to the initial AAF. Such a state is called the \emph{final argumentative state} and is defined as an argumentative state~$S^{\chi}(t)$ such that~$\forall x\in A$, $\exists t'\in\mathbb{T}$ such that $t'<t$ and~$enunciate_x\in E^\chi(t')$.
        
        \begin{theorem}[Soundness and Completeness]\label{th:sound_complete}
            Given a dialogue~$\Delta$ and a set of attack~$R$, given the argumentative setting~$\chi$, the associated argumentative graph~$AF'$ of the final argumentative state~$S^{\chi}(t)$, and~${AF=(A,R)}$ obtained from~$(\Delta,R)$, it holds that $AF'=AF$.
        \end{theorem}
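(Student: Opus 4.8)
The plan is to prove the two set equalities $A'=A$ and $R'=R$ separately, each by exploiting an invariance property of the transition system that is read off from the effect functions of all events. First I would handle $R'=R$. The key observation is that none of the events in the context touches the fluents $cA_{y,x}$: the effects of $enunciate_x$, $makesUnacc_{y,x}$, and $makesAcc_x$ only ever mention $p_x$, $a_x$, $a_y$, and never a literal over $cA$. Hence, by the update rule (condition 3 of Definition~\ref{def:semantics}), every $cA_{y,x}$ keeps its initial truth value throughout the whole execution, so $S^\chi(t)\models cA_{y,x} \Leftrightarrow S(0)\models cA_{y,x}$. Since the context is built so that the initial state encodes the attack relation exactly, i.e.\ $S(0)\models cA_{y,x} \Leftrightarrow (y,x)\in R$, Definition~\ref{def:associated_graph} yields $R'=\{(y,x)\mid S^\chi(t)\models cA_{y,x}\}=R$ immediately.

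Then I would establish $A'=A$ in two inclusions, using the monotonicity of presence. As stressed in the Remark of Section~\ref{sec:contextTranslation}, no event has $\neg p_x$ among its effects, while the only event whose effect contains $p_x$ is $enunciate_x$; consequently $p_x$ is a monotone fluent: once it holds at some time point it holds at every later one. For $A\subseteq A'$, the definition of final argumentative state guarantees that every $x\in A$ satisfies $enunciate_x\in E^\chi(t')$ for some $t'<t$; the effect of this action makes $S^\chi(t'+1)\models p_x$, and monotonicity propagates $p_x$ up to $t$, so $x\in A'$. For $A'\subseteq A$, I would argue that $p_x$ can become true only through $enunciate_x$, and such an action exists only for $x\in A$ (since $\mathbb{A}=\{enunciate_x\mid x\in A\}$); combined with the fact that the initial state declares no argument present, any $x$ with $S^\chi(t)\models p_x$ must belong to $A$. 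The two inclusions give $A'=A$, and together with $R'=R$ we conclude $AF'=(A',R')=(A,R)=AF$.

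The main obstacle is not conceptual but lies in making the two invariance arguments rigorous: one must verify, by going through condition 3 of Definition~\ref{def:semantics} case by case over the three event types, both that no transition can flip any $cA_{y,x}$ and that no transition can falsify a $p_x$ that already holds. A secondary point requiring care is to spell out the (otherwise implicit) encoding of the initial state $S(0)$ --- all $p_x$ false, all $a_x$ false, and $cA_{y,x}$ true exactly when $(y,x)\in R$ --- since both inclusions for $A'=A$ and the equality $R'=R$ ultimately rest on it.
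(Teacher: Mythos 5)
Your proposal follows essentially the same route as the paper's own (very terse) proof: it splits the claim into $A'=A$, obtained from the definition of the final argumentative state together with $e\ff(enunciate_x)=p_x\wedge a_x$, and $R'=R$, obtained from the construction of the $cA_{y,x}$ fluents; your version merely makes explicit the invariance arguments (no event ever falsifies $p_x$ or touches $cA_{y,x}$) and the implicit encoding of $S(0)$ that the paper leaves unstated. The only difference is that the paper's proof closes with an appeal to Proposition~\ref{pro:associated_graph} to record that acceptability in $S^{\chi}(t)$ coincides with acceptability in $AF$ --- the semantic content behind the name ``soundness and completeness'' --- which you omit, but which is not required by the literal equality $AF'=AF$ being proved.
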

        
        \begin{proof}
            As $AF'$ is associated to a final argumentative state,~$\forall x\in A$, $\exists t'\in\mathbb{T}$ such that $t'<t$ and~${enunciate_x\in E^\chi(t')}$. Now, $eff(enunciate_x) = p_x \wedge a_x$. So, $A' = A$.
            
            Moreover, by construction of $cA_{y,x}$ and $R'$, $R=R'$. So $AF = AF'$.
            
            Finally, from Proposition~\ref{pro:associated_graph} as $S^{\chi}(t)$ is argumentative, $\forall x \in A=A', S^{\chi}(t)\models a_x \Leftrightarrow x$ is acceptable by $A$.
        \end{proof}
        
    \subsection{On Temporality and Causality}
    
            The preliminary Proposition~\ref{prop:unicity_valid_execution} highlights the fact that temporality is captured by the proposed transformation. Indeed, given an order of enunciation, as expressed by sequence~$\varsigma$, 
        there exists a unique trace of states corresponding to a unique way to traverse a graph. When only given a context~$\kappa$, this unicity property does not hold. This section  
        shows that this temporality does not impact the final argumentative state  but impacts the causal relations.
        %studies how this temporality impacts both the final argumentative state and the causal relations.
        
        \begin{proposition}\label{prop:indep_order}
            Let~$\varsigma$ and~$\varsigma'$ be sequences such that~$\varsigma'$ is a permutation of the ranks of~$\varsigma$. Given the final argumentative states~$S^{\varsigma,\kappa}(t)$, $S^{\varsigma',\kappa}(t')$, belonging to~$\tau_{\varsigma,\kappa}^s$ and~$\tau_{\varsigma',\kappa}^s$, with $(t,t')\in\mathbb{T}\times\mathbb{T'}$, $S^{\varsigma,\kappa}(t)=S^{\varsigma',\kappa}(t')$. 
%            $(t,t')\in\mathbb{T}\times\mathbb{T'}$, and two final argumentative states~$S^{\varsigma,\kappa}(t)$, $S^{\varsigma',\kappa}(t')$%\footnote{Where, as $(a,o)\in\varsigma$, $\rho\left(\varsigma\right)\overset{\mathrm{def}}{=}(a,\rho(o))$.} 
            %belonging to~$\tau_{\varsigma,\kappa}^s$ and~$\tau_{\varsigma',\kappa}^s$, respectively,
        \end{proposition}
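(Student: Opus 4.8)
The plan is to reduce the claim to the Soundness and Completeness theorem (Theorem~\ref{th:sound_complete}) together with the acceptability characterisation of Proposition~\ref{pro:associated_graph}. The guiding observation is that a state, being coherent and complete (Definition~\ref{def:state}), is entirely determined by the truth values it assigns to the fluents $p_x$, $a_x$, and $cA_{y,x}$, which are exactly the fluents of $\mathbb{F}$. Hence it suffices to show that $S^{\varsigma,\kappa}(t)$ and $S^{\varsigma',\kappa}(t')$ agree on each of these three families.

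First I would observe that, since $\varsigma'$ is a permutation of the ranks of $\varsigma$, both sequences enunciate exactly the same set of arguments $A$ under the same attack relation $R$. As stressed in Section~\ref{sec:AAF}, building the AAF from $(\Delta,R)$ erases any temporal information, so both orderings yield the same $AF=(A,R)$. Applying Theorem~\ref{th:sound_complete} to each argumentative setting then gives that the associated graph (Definition~\ref{def:associated_graph}) of each of the two final argumentative states equals this common $AF=(A,R)$.

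From Definition~\ref{def:associated_graph} this at once fixes the presence and attack fluents: $S^{\varsigma,\kappa}(t)\models p_x \Leftrightarrow x\in A \Leftrightarrow S^{\varsigma',\kappa}(t')\models p_x$, and likewise $S^{\varsigma,\kappa}(t)\models cA_{y,x} \Leftrightarrow (y,x)\in R \Leftrightarrow S^{\varsigma',\kappa}(t')\models cA_{y,x}$. For the acceptability fluents, both states are argumentative and share the associated graph $(A,R)$, so Proposition~\ref{pro:associated_graph} yields, for every $x$, that $S^{\varsigma,\kappa}(t)\models a_x$ iff $x$ is acceptable by $A$ iff $S^{\varsigma',\kappa}(t')\models a_x$. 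Since the two states agree on every fluent, they are equal.

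The result is thus essentially a corollary of Theorem~\ref{th:sound_complete}, so I do not expect a genuine obstacle; the only point deserving care is that both executions do reach a final argumentative state. This is already granted by the hypothesis of the statement, but it can also be argued directly by induction on the enunciations: the initial state is argumentative (all $p_x$ and $a_x$ false, so both conditions of Definition~\ref{def:admissible_state} hold vacuously), and Proposition~\ref{prop:arg_state_necess} supplies a fresh argumentative state after each enunciation until every argument of $A$ has been stated.
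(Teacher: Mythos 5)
Your proof is correct and follows essentially the same route as the paper's: establish that the two final argumentative states have the same associated graph, then invoke Proposition~\ref{pro:associated_graph} to transfer acceptability, hence equality of the $a_x$ fluents. You are in fact slightly more careful than the paper (which argues graph equality directly from the shared actions and context rather than via Theorem~\ref{th:sound_complete}, and stops after matching the $a_x$ fluents), since you explicitly note that agreement on $p_x$, $a_x$, and $cA_{y,x}$ determines the whole state by coherence and completeness.
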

        
        \begin{proof}
            Let us call $AF$ and $AF'$, the associated graphs of  the final argumentative states $S^{\varsigma,\kappa}(t)$ and $S^{\varsigma',\kappa}(t')$, respectively. Given that they have the same actions in the sequence, then $A = A'$. They also share the same context so $R = R'$. Therefore $AF = AF'$.
            
            Now, according to Proposition~\ref{pro:associated_graph}, $x \in A$ acceptable by $A \Leftrightarrow S^{\varsigma,\kappa}(t)\models a_x$. So $\forall x, S^{\varsigma,\kappa}(t)\models a_x \Leftrightarrow \forall x, S^{\varsigma',\kappa}(t')\models a_x$.
        \end{proof}
        
        This property implies that the final argumentative state does not depend on~$\varsigma$, but only on the set of arguments it contains: no matter the order in which arguments are enunciated, the final argumentative state is always the same. This immediately leads to the following unicity corollary:  
        
        \begin{corollary}\label{cor:unique_arg_state}
            Given an~${AF=(A,R)}$, $\exists! S^\chi(t)$ final argumentative state which associated argumentative graph is~${AF=(A,R)}$.
        \end{corollary}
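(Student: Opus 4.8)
The plan is to prove Corollary~\ref{cor:unique_arg_state} by combining the existence and uniqueness guarantees already established in the preceding results. The statement asserts both existence ($\exists$) and uniqueness ($!$) of a final argumentative state whose associated graph equals a given $AF=(A,R)$, so I would treat these two aspects separately.

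For \emph{existence}, I would first recover a dialogue $\Delta$ from $(A,R)$ by choosing any enumeration of $A$ (for instance, any sequence $\varsigma$ in which each $enunciate_x$ appears exactly once), which is possible since $A$ is finite. This yields an argumentative setting $\chi=(\varsigma,\kappa)$. By Proposition~\ref{prop:unicity_valid_execution} the corresponding traces $\tau_\chi^e$ and $\tau_\chi^s$ exist and are unique. I would then invoke Proposition~\ref{prop:arg_state_necess} repeatedly: starting from the initial argumentative state and enunciating each argument in turn, each enunciation is followed (after finitely many update events) by a fresh argumentative state. After all arguments of $A$ have been enunciated, the resulting state is a \emph{final argumentative state} in the sense defined just before Theorem~\ref{th:sound_complete}. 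Theorem~\ref{th:sound_complete} then gives that its associated graph $AF'$ equals $AF$, securing existence.

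For \emph{uniqueness}, the key ingredient is Proposition~\ref{prop:indep_order}. Suppose $S^{\chi}(t)$ and $\tilde{S}^{\tilde{\chi}}(\tilde{t})$ are two final argumentative states whose associated graphs both equal $AF=(A,R)$. Since both must enunciate exactly the arguments of $A$, their underlying sequences $\varsigma$ and $\tilde{\varsigma}$ contain the same set of actions and differ at most by the order and grouping of ranks; hence $\tilde{\varsigma}$ is a permutation of the ranks of $\varsigma$ (or can be related to one). Proposition~\ref{prop:indep_order} then yields $S^{\chi}(t)=\tilde{S}^{\tilde{\chi}}(\tilde{t})$ as \emph{sets of fluent literals}: the presence fluents agree because $A'=A=\tilde{A}'$, the attack fluents agree because $R'=R=\tilde{R}'$ (both inherited from the common context $\kappa$), and the acceptability fluents agree because, by Proposition~\ref{pro:associated_graph}, $a_x$ holds in a final argumentative state precisely when $x$ is acceptable by $A$, a condition that depends only on $AF$ and not on the enunciation order.

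The main obstacle I anticipate is \textbf{stating uniqueness at the right level of abstraction}. A final argumentative state is a set of fluent literals (Definition~\ref{def:state}), not a time-stamped object, so ``$\exists!\,S^\chi(t)$'' should be read as uniqueness of that set, independent of the particular setting $\chi$ and time point $t$ at which it is realised. I would therefore be careful to phrase the uniqueness argument as an equality of states rather than of executions, since different sequences produce genuinely different traces but, by Proposition~\ref{prop:indep_order}, the same final state. Once this reading is fixed, the proof is essentially an assembly of Proposition~\ref{prop:indep_order} (for the invariance of acceptability) and Theorem~\ref{th:sound_complete} (for the graph correspondence), with no substantial new computation required.
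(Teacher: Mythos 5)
Your proposal is correct and follows essentially the same route as the paper, which states the corollary as an immediate consequence of Proposition~\ref{prop:indep_order} (order-invariance of the final argumentative state) without giving a separate proof. Your write-up merely makes explicit the existence half (via Proposition~\ref{prop:arg_state_necess} and Theorem~\ref{th:sound_complete}) and the reading of uniqueness as equality of states rather than of executions, both of which the paper leaves implicit.
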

        
        %\mj{en effet, preuve immédiate : virée !}
       % \begin{proof}
        %    Can be immediately derived from the fact that the final argumentative state does not depend on~$\varsigma$ but only on $A$ (see Proposition~\ref{prop:indep_order}),  that the associated argumentative graph of the final argumentative state~$AF'=AF$ (see Theorem~\ref{th:sound_complete}), and that the trace $\tau_{\chi}^s$ is unique (see Proposition~\ref{prop:unicity_valid_execution}).
        %\end{proof}
        
        Proposition~\ref{prop:indep_order} and its corollary are in accordance with AAF. The relevance of temporality integration comes from the intermediate states, as illustrated in the next section, and from the causal relations that can be derived from it:
        %Then does adding temporality really matter in the end?
        %\mj{pas fan des qusetions dans les articles}
        
        \begin{proposition}\label{prop:dep_ordre_causes}
            Causal relations depend on the sequence~$\varsigma$.
        \end{proposition}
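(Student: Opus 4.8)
The plan is to prove Proposition~\ref{prop:dep_ordre_causes} by exhibiting a concrete counterexample: a single AAF together with two sequences~$\varsigma$ and~$\varsigma'$ that differ only in the order of enunciation, and showing that the causal relations extracted from the two resulting traces are not identical. Since Proposition~\ref{prop:indep_order} already guarantees that the final argumentative state is independent of~$\varsigma$, the difference cannot lie in the final acceptability values; it must lie in the \emph{intermediate} dynamics. The key observation exploited here is that the causal relations of Section~\ref{sec:causality} are defined on the event trace~$\tau_{\chi}^e$ and state trace~$\tau_{\chi}^s$, which by Proposition~\ref{prop:unicity_valid_execution} are uniquely determined by~$\chi=(\varsigma,\kappa)$ but genuinely vary with~$\varsigma$.

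First I would fix a minimal graph on which the order of enunciation changes which $makesUnacc$ and $makesAcc$ events actually fire along the way. A natural choice is a short attack chain, say $x \to y \to z$ (i.e. $cA_{x,y}$ and $cA_{y,z}$), where enunciating the attackers in one order forces an intermediate unacceptability that is later repaired, whereas the reverse order never produces that transient event. Concretely, enunciating $z$, then $y$, then $x$ triggers $makesUnacc_{y,z}$ at the step where $y$ appears, and then $makesUnacc_{x,y}$ followed by $makesAcc_z$ once $x$ is stated; enunciating $z$, then $x$, then $y$ yields a different chain of intermediate events. In both cases the final argumentative state assigns the same acceptability to $x,y,z$ (consistent with Corollary~\ref{cor:unique_arg_state}), but the set of events that occurred, and the time points at which they occurred, differ.

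Then I would invoke the causality definitions of Section~\ref{sec:causality} to show that these trace differences produce different NESS-causes. Since a direct NESS-cause relates an occurrence of an event to a formula of~$\mathcal{P}$ true at a given time, and a NESS-cause is obtained by chaining back through $tri$ conditions, an event that fires in one trace but not the other cannot appear in the causal chain of the other. In particular, the occurrence of an intermediate event such as $makesUnacc_{y,z}$ can be a NESS-cause of $(\neg a_z, t)$ at some intermediate time in one execution, whereas in the other ordering that occurrence simply does not exist, so it cannot be a cause of anything. This suffices to establish that the causal relation is not invariant under permutation of ranks.

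The main obstacle will be keeping the counterexample self-contained and rigorous without reproducing the full technical machinery of~\cite{sarmiento_action_2022-1}: the causality definitions are only sketched in the excerpt, so I must argue at the level of which event occurrences appear in~$\tau_{\chi}^e$ rather than computing NESS-tests explicitly. I would therefore lean on the structural fact that any causal relation is a function of the (unique) traces, and that the traces demonstrably differ in their event content across the two orderings; the detailed verification that a specific event occurrence is a genuine NESS-cause in one trace can be relegated to the illustration in Section~\ref{sec:discu_causalite}, which the paper explicitly defers to. Thus the cleanest presentation is to display the two traces side by side, point out a causal link present in one and absent in the other, and conclude.
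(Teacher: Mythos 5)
Your proposal is correct and follows the same method as the paper: Proposition~\ref{prop:dep_ordre_causes} is established by exhibiting one context with two permuted sequences whose causal relations differ. The instantiation differs, though, in a way worth noting. The paper reuses the running radiology example and its reordering (Example~\ref{ex:IRM_ou_radio_modif}), certifying via the ASP program that $ness(o(enunciate_d,4),h(neg(a_c),31))$ holds under~$\varsigma$ while no occurrence of $enunciate_d$ is a NESS-cause of $\neg a_c$ at any time under~$\varsigma'$; the witness there is a pair --- an action present in both sequences and a literal true in both final states (by Proposition~\ref{prop:indep_order}) --- whose causal link nonetheless appears in only one trace. Your primary witness is weaker: an exogenous occurrence ($makesUnacc_{y,z}$) that simply does not exist in the second trace, so its causal relations differ almost by definition; a skeptic could object that this only shows the \emph{traces} differ, which is already implicit in Proposition~\ref{prop:unicity_valid_execution}. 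Your three-node chain does support the stronger reading, however --- under $z,y,x$ the occurrence of $enunciate_y$ is a NESS-cause of $(\neg a_z,t)$ at intermediate times, while under $z,x,y$ the priority rule $makesUnacc_{x,y}\succ_{\mathbb{E}}makesUnacc_{y,z}$ suppresses $makesUnacc_{y,z}$ entirely and $\neg a_z$ never holds --- so I would make that the headline witness. What your version buys is a minimal, hand-checkable example independent of the ASP implementation; what the paper's buys is continuity with the worked example of Section~\ref{sec:discu_causalite} and a machine-checked causal claim.
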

        
        \begin{proof}
        This proposition is proved by example, commented in details in the next section that illustrated the effect of considering Example~\ref{ex:IRM_ou_radio}, and a modification thereof in Example~\ref{ex:IRM_ou_radio_modif}.
            Let $\Pi(\chi)$ be the program obtained given~$\kappa,\varsigma$ of Example~\ref{ex:IRM_ou_radio}, as described in Section~\ref{sec:ASP}, and let $\Pi(\chi')$ be the program similarly obtained given~$\kappa,\varsigma'$ of Example~\ref{ex:IRM_ou_radio_modif}. Given the NESS-cause definition in~\cite{sarmiento_action_2023}, ${\Pi(\chi)\models ness(o(enunciate_d,4),h(neg(a_c),31))}$, where occurrence of events $(e,t)\in\mathbb{E}\times\mathbb{T}$ are represented by the predicate $o(e,t)$ and the truthfulness of $\mathcal{P}$ formulas $(\psi,t)\in\mathbb{F}\times\mathbb{T}$ by the predicate $h(\psi,t)$, but $\nexists t,t'\in\mathbb{T}^2$, ${\Pi(\chi')\models ness(o(enunciate_d,t),h(neg(a_c),t'))}$.
        \end{proof}
        
        This proposition shows that causal relations depend on the order in which arguments are enunciated. Thus, even if, as in the classical framework of argumentation, the acceptability of an argument in the final argumentative state does not depend on it (cf~Theorem~\ref{th:sound_complete}), it is still essential to take temporality into account when dealing with notions close to causality, especially in explicability.

    \section{Application to the Example and Discussion}
\label{sec:discussion}

    This section proposes to illustrate the proposed transformation from AAF to action language for Example~\ref{ex:IRM_ou_radio}, highlighting its exploitation to get enriched information about the modelled dialogue, more precisely for providing visual representations justifying the acceptance or rejection of arguments. It considers successively two classes of argumentation explanations in \cite{cyras_survey}'s taxonomy: 
    it first shows how it can lead to graphical representations of the processes of accepting/rejecting arguments, it then discusses the case of causal explanations.

        %In this section we apply program~$\Pi(\chi)$ to Example~\ref{ex:IRM_ou_radio}. The event trace $\tau_{\chi}^e$ and state trace~$\tau_{\chi}^s$, as well as the causal relations lead to Figures~\ref{fig:exemple_ini} and~\ref{fig:exemple_fin}.
        
    \subsection{Graphical Representation and Explanation}
    \label{sec:discu_temps}

    According to \cite{cyras_survey}, argumentation explanations can consist in extracting argumentative subgraphs to justify the acceptance or rejection of an argument for a given AAF semantics, producing a graphical representation of the underlying process.

% In argumentation, \cite{cyras_survey} proposed a classification of methods for generating explanations. Among them, one category focuses on the extraction of argumentative subgraphs to justify the acceptance or rejection of an argument for a certain semantics, producing a graphical representation of the process of accepting or rejecting an argument.
        
The transformation proposed in the previous section makes it possible to derive  graphical representations of the argumentative process. Indeed, the traces of events and states can be used  to obtain a narrative of the interaction that can be represented graphically. The visualisation we propose is illustrated in Figure~\ref{fig:exemple_ini}, in a simplified form, for Example~\ref{ex:IRM_ou_radio}. It is enriched in the next section using causality relations.

        \begin{figure}[t]
            \centering
        	\resizebox{7cm}{!}{\tikzset{every picture/.style={line width=0.75pt}} %set default line width to 0.75pt        

\begin{tikzpicture}[x=0.75pt,y=0.75pt,yscale=-1,xscale=1]
%uncomment if require: \path (0,300); %set diagram left start at 0, and has height of 300

%Shape: Rectangle [id:dp32394197299921523] 
\draw  [line width=0.75]  (38.8,129.2) -- (74.6,129.2) -- (74.6,148.54) -- (38.8,148.54) -- cycle ;
%Straight Lines [id:da5103454315814933] 
\draw  [dash pattern={on 4.5pt off 4.5pt}]  (85.53,6.3) -- (85,191.83) ;
%Straight Lines [id:da9625799604219181] 
\draw  [dash pattern={on 4.5pt off 4.5pt}]  (181.03,5.8) -- (180.67,191.83) ;
%Straight Lines [id:da937826652519754] 
\draw  [dash pattern={on 4.5pt off 4.5pt}]  (286.6,5.8) -- (286.67,191.83) ;
%Shape: Rectangle [id:dp39055029505322625] 
\draw  [line width=0.75]  (133.09,90.57) -- (168.89,90.57) -- (168.89,109.91) -- (133.09,109.91) -- cycle ;
%Shape: Polygon [id:dp25831662543105294] 
\draw   (27.7,21.73) -- (21,33.26) -- (7.6,33.26) -- (0.91,21.73) -- (7.6,10.19) -- (21,10.19) -- cycle ;
%Shape: Polygon [id:dp17264867524000027] 
\draw   (121.4,21.73) -- (114.7,33.26) -- (101.3,33.26) -- (94.6,21.73) -- (101.3,10.19) -- (114.7,10.19) -- cycle ;
%Shape: Polygon [id:dp9933858680624272] 
\draw   (215.4,23.23) -- (208.7,34.76) -- (195.3,34.76) -- (188.6,23.23) -- (195.3,11.69) -- (208.7,11.69) -- cycle ;
%Shape: Polygon [id:dp7138531738923998] 
\draw   (321.2,61.23) -- (314.5,72.76) -- (301.1,72.76) -- (294.4,61.23) -- (301.1,49.69) -- (314.5,49.69) -- cycle ;
%Shape: Regular Polygon [id:dp10291797427860683] 
\draw   (323.5,139.53) -- (316.81,151.12) -- (303.43,151.12) -- (296.74,139.53) -- (303.43,127.95) -- (316.81,127.95) -- cycle ;
%Shape: Regular Polygon [id:dp2894654570081998] 
\draw   (218.5,139.53) -- (211.81,151.12) -- (198.43,151.12) -- (191.74,139.53) -- (198.43,127.95) -- (211.81,127.95) -- cycle ;
%Shape: Regular Polygon [id:dp8293769486214224] 
\draw   (122.9,100.73) -- (116.21,112.32) -- (102.83,112.32) -- (96.14,100.73) -- (102.83,89.15) -- (116.21,89.15) -- cycle ;
%Shape: Regular Polygon [id:dp8361683320569804] 
\draw   (123.5,139.53) -- (116.81,151.12) -- (103.43,151.12) -- (96.74,139.53) -- (103.43,127.95) -- (116.81,127.95) -- cycle ;
%Shape: Regular Polygon [id:dp5470998290634859] 
\draw   (28.9,100.73) -- (22.21,112.32) -- (8.83,112.32) -- (2.14,100.73) -- (8.83,89.15) -- (22.21,89.15) -- cycle ;
%Shape: Rectangle [id:dp8416513794122603] 
\draw  [line width=0.75]  (228.69,50.97) -- (264.49,50.97) -- (264.49,70.31) -- (228.69,70.31) -- cycle ;
%Shape: Rectangle [id:dp842481801930145] 
\draw  [line width=0.75]  (333.09,160.97) -- (368.89,160.97) -- (368.89,180.31) -- (333.09,180.31) -- cycle ;
%Shape: Regular Polygon [id:dp0962459706720944] 
\draw  [color={rgb, 255:red, 155; green, 155; blue, 155 }  ,draw opacity=1 ] (218.7,100.53) -- (212.01,112.12) -- (198.63,112.12) -- (191.94,100.53) -- (198.63,88.95) -- (212.01,88.95) -- cycle ;
%Shape: Regular Polygon [id:dp5093510894708538] 
\draw  [color={rgb, 255:red, 155; green, 155; blue, 155 }  ,draw opacity=1 ] (321.3,101.13) -- (314.61,112.72) -- (301.23,112.72) -- (294.54,101.13) -- (301.23,89.55) -- (314.61,89.55) -- cycle ;
%Shape: Polygon [id:dp3364030899411299] 
\draw   (321.4,23.23) -- (314.7,34.76) -- (301.3,34.76) -- (294.6,23.23) -- (301.3,11.69) -- (314.7,11.69) -- cycle ;

% Text Node
\draw (27.13,180.4) node [anchor=north west][inner sep=0.75pt]  [font=\scriptsize]  {$t=6$};
% Text Node
\draw (44,135.6) node [anchor=north west][inner sep=0.75pt]  [font=\scriptsize]  {$enu_{e}$};
% Text Node
\draw (135.09,96.97) node [anchor=north west][inner sep=0.75pt]  [font=\scriptsize]  {$una_{e,d}$};
% Text Node
\draw (116.67,181.33) node [anchor=north west][inner sep=0.75pt]  [font=\scriptsize]  {$t=7$};
% Text Node
\draw (215.8,180.9) node [anchor=north west][inner sep=0.75pt]  [font=\scriptsize]  {$t=8$};
% Text Node
\draw (301,179.9) node [anchor=north west][inner sep=0.75pt]  [font=\scriptsize]  {$t=9$};
% Text Node
\draw (14.3,22.73) node  [font=\small]  {$b$};
% Text Node
\draw (108,22.73) node  [font=\small]  {$b$};
% Text Node
\draw (202,24.23) node  [font=\small]  {$b$};
% Text Node
\draw (307.8,62.23) node  [font=\small]  {$\mathbf{c}$};
% Text Node
\draw (310.12,140.53) node  [font=\small]  {$e$};
% Text Node
\draw (205.12,140.53) node  [font=\small]  {$e$};
% Text Node
\draw (109.52,101.73) node  [font=\small]  {$d$};
% Text Node
\draw (110.12,140.53) node  [font=\small]  {$e$};
% Text Node
\draw (15.52,101.73) node  [font=\small]  {$d$};
% Text Node
\draw (236.69,58.57) node [anchor=north west][inner sep=0.75pt]  [font=\scriptsize]  {$acc_{c}$};
% Text Node
\draw (339.09,167.57) node [anchor=north west][inner sep=0.75pt]  [font=\scriptsize]  {$enu_{f}$};
% Text Node
\draw (205.32,101.53) node  [font=\small,color={rgb, 255:red, 155; green, 155; blue, 155 }  ,opacity=1 ]  {$d$};
% Text Node
\draw (307.92,102.13) node  [font=\small,color={rgb, 255:red, 155; green, 155; blue, 155 }  ,opacity=1 ]  {$d$};
% Text Node
\draw (308,24.23) node  [font=\small]  {$b$};

\end{tikzpicture}}
        	\caption{Partial graphical representation 
        	%of event and state traces $\tau_{\chi}^e$ and $\tau_{\chi}^s$ 
        	associated to Example~\ref{ex:IRM_ou_radio}. 
        	Hexagons represent fluents and rectangles events.
        	}
        	\label{fig:exemple_ini} 
        \end{figure}
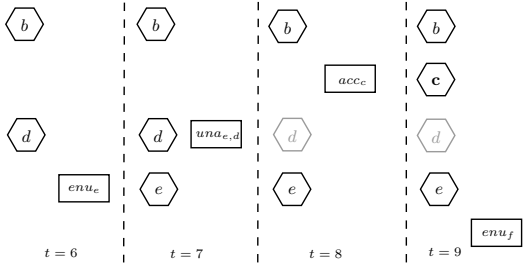

        \begin{table}[t]
        \footnotesize
        \addtolength{\tabcolsep}{-1pt}
        \begin{center}
            \begin{tabular}{|c|cccccccccccc|c|}
        % \hline
        %  & \multicolumn{13}{c|}{Séquence d'action $enunciate_x$} \\ \cline{2-14}
        \hline
         & $a$ & \multicolumn{1}{|c|}{$b$} & \multicolumn{1}{c|}{$c$} & \multicolumn{1}{c|}{$d$} & \multicolumn{1}{c|}{$e$} & \multicolumn{1}{c|}{$f$} & \multicolumn{1}{c|}{$g$} & \multicolumn{1}{c|}{$h$,$i$} & \multicolumn{1}{c|}{$j$} & \multicolumn{1}{c|}{$k$} & \multicolumn{1}{c|}{$l$} & $m$ & $n$ \\ \hline
        $a$ & \multicolumn{1}{c|}{\scriptsize{$\bullet$}} & \multicolumn{1}{c|}{\scriptsize{$\circ$}} & \multicolumn{1}{c|}{\scriptsize{$\circ$}} & \multicolumn{1}{c|}{\scriptsize{$\circ$}} & \multicolumn{1}{c|}{\scriptsize{$\circ$}} & \multicolumn{1}{c|}{\scriptsize{$\circ$}} & \multicolumn{1}{c|}{\scriptsize{$\circ$}} & \multicolumn{1}{c|}{\scriptsize{$\circ$}} & \multicolumn{1}{c|}{\scriptsize{$\circ$}} & \multicolumn{1}{c|}{\scriptsize{$\circ$}} & \multicolumn{1}{c|}{\scriptsize{$\circ$}} & \scriptsize{$\circ$} & \scriptsize{$\circ$} \\ \hline
        $b$ & \multicolumn{1}{c|}{\cellcolor[HTML]{B2BEB5}} & \multicolumn{1}{c|}{\scriptsize{$\bullet$}} & \multicolumn{1}{c|}{\scriptsize{$\bullet$}} & \multicolumn{1}{c|}{\scriptsize{$\bullet$}} & \multicolumn{1}{c|}{\scriptsize{$\bullet$}} & \multicolumn{1}{c|}{\scriptsize{$\bullet$}} & \multicolumn{1}{c|}{\scriptsize{$\bullet$}} & \multicolumn{1}{c|}{\scriptsize{$\bullet$}} & \multicolumn{1}{c|}{\scriptsize{$\bullet$}} & \multicolumn{1}{c|}{\scriptsize{$\bullet$}} & \multicolumn{1}{c|}{\scriptsize{$\bullet$}} & \scriptsize{$\bullet$} & \scriptsize{$\bullet$} \\ \cline{1-1} \cline{3-14} 
        $c$ & \cellcolor[HTML]{B2BEB5} & \multicolumn{1}{c|}{\cellcolor[HTML]{B2BEB5}} & \multicolumn{1}{c|}{\scriptsize{$\bullet$}} & \multicolumn{1}{c|}{\scriptsize{$\circ$}} & \multicolumn{1}{c|}{\scriptsize{$\bullet$}} & \multicolumn{1}{c|}{\scriptsize{$\circ$}} & \multicolumn{1}{c|}{\scriptsize{$\bullet$}} & \multicolumn{1}{c|}{\scriptsize{$\circ$}} & \multicolumn{1}{c|}{\scriptsize{$\circ$}} & \multicolumn{1}{c|}{\scriptsize{$\bullet$}} & \multicolumn{1}{c|}{\scriptsize{$\bullet$}} & \scriptsize{$\bullet$} & \scriptsize{$\circ$} \\ \cline{1-1} \cline{4-14} 
        $d$ & \cellcolor[HTML]{B2BEB5} & \cellcolor[HTML]{B2BEB5} & \multicolumn{1}{c|}{\cellcolor[HTML]{B2BEB5}} & \multicolumn{1}{c|}{\scriptsize{$\bullet$}} & \multicolumn{1}{c|}{\scriptsize{$\circ$}} & \multicolumn{1}{c|}{\scriptsize{$\bullet$}} & \multicolumn{1}{c|}{\scriptsize{$\circ$}} & \multicolumn{1}{c|}{\scriptsize{$\circ$}} & \multicolumn{1}{c|}{\scriptsize{$\circ$}} & \multicolumn{1}{c|}{\scriptsize{$\circ$}} & \multicolumn{1}{c|}{\scriptsize{$\circ$}} & \scriptsize{$\circ$} & \scriptsize{$\circ$} \\ \cline{1-1} \cline{5-14} 
        $e$ & \cellcolor[HTML]{B2BEB5} & \cellcolor[HTML]{B2BEB5} & \cellcolor[HTML]{B2BEB5} & \multicolumn{1}{c|}{\cellcolor[HTML]{B2BEB5}} & \multicolumn{1}{c|}{\scriptsize{$\bullet$}} & \multicolumn{1}{c|}{\scriptsize{$\circ$}} & \multicolumn{1}{c|}{\scriptsize{$\bullet$}} & \multicolumn{1}{c|}{\scriptsize{$\bullet$}} & \multicolumn{1}{c|}{\scriptsize{$\bullet$}} & \multicolumn{1}{c|}{\scriptsize{$\bullet$}} & \multicolumn{1}{c|}{\scriptsize{$\bullet$}} & \scriptsize{$\bullet$} & \scriptsize{$\bullet$} \\ \cline{1-1} \cline{6-14} 
        $f$ & \cellcolor[HTML]{B2BEB5} & \cellcolor[HTML]{B2BEB5} & \cellcolor[HTML]{B2BEB5} & \cellcolor[HTML]{B2BEB5} & \multicolumn{1}{c|}{\cellcolor[HTML]{B2BEB5}} & \multicolumn{1}{c|}{\scriptsize{$\bullet$}} & \multicolumn{1}{c|}{\scriptsize{$\circ$}} & \multicolumn{1}{c|}{\scriptsize{$\circ$}} & \multicolumn{1}{c|}{\scriptsize{$\circ$}} & \multicolumn{1}{c|}{\scriptsize{$\circ$}} & \multicolumn{1}{c|}{\scriptsize{$\circ$}} & \scriptsize{$\circ$} & \scriptsize{$\circ$} \\ \cline{1-1} \cline{7-14} 
        $g$ & \cellcolor[HTML]{B2BEB5} & \cellcolor[HTML]{B2BEB5} & \cellcolor[HTML]{B2BEB5} & \cellcolor[HTML]{B2BEB5} & \cellcolor[HTML]{B2BEB5} & \multicolumn{1}{c|}{\cellcolor[HTML]{B2BEB5}} & \multicolumn{1}{c|}{\scriptsize{$\bullet$}} & \multicolumn{1}{c|}{\scriptsize{$\bullet$}} & \multicolumn{1}{c|}{\scriptsize{$\bullet$}} & \multicolumn{1}{c|}{\scriptsize{$\bullet$}} & \multicolumn{1}{c|}{\scriptsize{$\bullet$}} & \scriptsize{$\bullet$} & \scriptsize{$\bullet$} \\ \cline{1-1} \cline{8-14} 
        $h$ & \cellcolor[HTML]{B2BEB5} & \cellcolor[HTML]{B2BEB5} & \cellcolor[HTML]{B2BEB5} & \cellcolor[HTML]{B2BEB5} & \cellcolor[HTML]{B2BEB5} & \cellcolor[HTML]{B2BEB5} & \multicolumn{1}{c|}{\cellcolor[HTML]{B2BEB5}} & \multicolumn{1}{c|}{\scriptsize{$\bullet$}} & \multicolumn{1}{c|}{\scriptsize{$\circ$}} & \multicolumn{1}{c|}{\scriptsize{$\circ$}} & \multicolumn{1}{c|}{\scriptsize{$\circ$}} & \scriptsize{$\circ$} & \scriptsize{$\circ$} \\ \cline{1-1} \cline{9-14} 
        $i$ & \cellcolor[HTML]{B2BEB5} & \cellcolor[HTML]{B2BEB5} & \cellcolor[HTML]{B2BEB5} & \cellcolor[HTML]{B2BEB5} & \cellcolor[HTML]{B2BEB5} & \cellcolor[HTML]{B2BEB5} & \multicolumn{1}{c|}{\cellcolor[HTML]{B2BEB5}} & \multicolumn{1}{c|}{\scriptsize{$\bullet$}} & \multicolumn{1}{c|}{\scriptsize{$\bullet$}} & \multicolumn{1}{c|}{\scriptsize{$\circ$}} & \multicolumn{1}{c|}{\scriptsize{$\circ$}} & \scriptsize{$\circ$} & \scriptsize{$\circ$} \\ \cline{1-1} \cline{9-14} 
        $j$ & \cellcolor[HTML]{B2BEB5} & \cellcolor[HTML]{B2BEB5} & \cellcolor[HTML]{B2BEB5} & \cellcolor[HTML]{B2BEB5} & \cellcolor[HTML]{B2BEB5} & \cellcolor[HTML]{B2BEB5} & \cellcolor[HTML]{B2BEB5} & \multicolumn{1}{c|}{\cellcolor[HTML]{B2BEB5}} & \multicolumn{1}{c|}{\scriptsize{$\bullet$}} & \multicolumn{1}{c|}{\scriptsize{$\bullet$}} & \multicolumn{1}{c|}{\scriptsize{$\bullet$}} & \scriptsize{$\bullet$} & \scriptsize{$\bullet$} \\ \cline{1-1} \cline{10-14} 
        $k$ & \cellcolor[HTML]{B2BEB5} & \cellcolor[HTML]{B2BEB5} & \cellcolor[HTML]{B2BEB5} & \cellcolor[HTML]{B2BEB5} & \cellcolor[HTML]{B2BEB5} & \cellcolor[HTML]{B2BEB5} & \cellcolor[HTML]{B2BEB5} & \cellcolor[HTML]{B2BEB5} & \multicolumn{1}{c|}{\cellcolor[HTML]{B2BEB5}} & \multicolumn{1}{c|}{\scriptsize{$\bullet$}} & \multicolumn{1}{c|}{\scriptsize{$\bullet$}} & \scriptsize{$\bullet$} & \scriptsize{$\bullet$} \\ \cline{1-1} \cline{11-14} 
        $l$ & \cellcolor[HTML]{B2BEB5} & \cellcolor[HTML]{B2BEB5} & \cellcolor[HTML]{B2BEB5} & \cellcolor[HTML]{B2BEB5} & \cellcolor[HTML]{B2BEB5} & \cellcolor[HTML]{B2BEB5} & \cellcolor[HTML]{B2BEB5} & \cellcolor[HTML]{B2BEB5} & \cellcolor[HTML]{B2BEB5} & \multicolumn{1}{c|}{\cellcolor[HTML]{B2BEB5}} & \multicolumn{1}{c|}{\scriptsize{$\bullet$}} & \scriptsize{$\circ$} & \scriptsize{$\bullet$} \\ \cline{1-1} \cline{12-14} 
        $m$ & \cellcolor[HTML]{B2BEB5} & \cellcolor[HTML]{B2BEB5} & \cellcolor[HTML]{B2BEB5} & \cellcolor[HTML]{B2BEB5} & \cellcolor[HTML]{B2BEB5} & \cellcolor[HTML]{B2BEB5} & \cellcolor[HTML]{B2BEB5} & \cellcolor[HTML]{B2BEB5} & \cellcolor[HTML]{B2BEB5} & \cellcolor[HTML]{B2BEB5} & \multicolumn{1}{c|}{\cellcolor[HTML]{B2BEB5}} & \scriptsize{$\bullet$} & \scriptsize{$\circ$} \\ \cline{1-1} \cline{13-14} 
        $n$ & \cellcolor[HTML]{B2BEB5} & \cellcolor[HTML]{B2BEB5} & \cellcolor[HTML]{B2BEB5} & \cellcolor[HTML]{B2BEB5} & \cellcolor[HTML]{B2BEB5} & \cellcolor[HTML]{B2BEB5} & \cellcolor[HTML]{B2BEB5} & \cellcolor[HTML]{B2BEB5} & \cellcolor[HTML]{B2BEB5} & \cellcolor[HTML]{B2BEB5} & \cellcolor[HTML]{B2BEB5} & \cellcolor[HTML]{B2BEB5} & \scriptsize{$\bullet$} \\ \hline
        \end{tabular}
        \end{center}
        \caption{Tabular representation of the entire interaction.        
        \label{tab:scenario1}}
        \end{table}
        
        Given an event and a state traces~$\tau_{\chi}^e$ and~$\tau_{\chi}^s$, we propose to display the consecutive states, showing fluents as hexagons and the triggered events as rectangles. Since the acceptability of arguments is %what is mainly observed,
        what mainly matters, we propose to represent only the fluents $a_x$, using the argument names for the sake of readability.  
        Moreover, we do not show fluents when their negation is true in the state, except when the occurrence of a represented event results in the negation of the fluent. In this case, the negation is represented by a lighter shade. The events~$enunciate_x$, $makesUnacc_{y,x}$, and~$makesAcc_x$ are shortened as~$enu_x$, $una_{y,x}$, and~$acc_x$, respectively.

        \addtocounter{example}{-1}
        \begin{example} (continued) --
            Figure~\ref{fig:exemple_ini} shows a partial representation of the state trace obtained for Example~\ref{ex:IRM_ou_radio} using the ASP implementation described in Section~\ref{sec:ASP}. 
            
            The first represented state corresponds to~$S(6)$, an argumentative state in the sense of Def.~\ref{def:admissible_state}, which allows for the enunciation of the next argument: since all arguments preceding~$e$ have already been enunciated, the action~$enunciate_e$ can be performed. The occurrence of this event is the transition to the next state~$S(7)$ where, as shown in Figure~\ref{fig:exemple_ini}, argument~$e$ is acceptable. Unlike~$S(6)$, $S(7)$ is not an argumentative state: condition (i) of Def.~\ref{def:admissible_state} is not satisfied because~${(a_d\wedge cA_{e,d})}$ and~${a_e\in S(7)}$. Therefore, the next argument cannot be enunciated. However, since the triggering conditions of~$makesUnacc_{e,d}$ are satisfied, this exogenous event is triggered, leading to a new state transition. Since argument $d$ is no longer acceptable in~$S(8)$, condition (i)  of Def.~\ref{def:admissible_state} is now satisfied. Still, condition (ii) is not satisfied by~$S(8)$, preventing the next argument from being enunciated. Instead,~$makesAcc_c$ is triggered, leading to the following state~$S(9)$. Here, as shown in Figure~\ref{fig:exemple_ini}, argument~$c$ is acceptable. As this new state is argumentative, the next argument,~$f$, can be stated. The dialogue continues step by step and ends at state~$S(31)$. 
        \end{example}

        A second, more compact, tabular visualisation is proposed, illustrated in Table~\ref{tab:scenario1}: the arguments are represented in the first column, the order of the performed actions in the first row. For the sake of readability,  $enunciate_x$ is shortened as~$x$. In each table cell, $\bullet$ means that the argument is acceptable while~$\circ$ means that it is not. If an argument has not been enunciated yet, its acceptability cannot be evaluated, which is represented by the shaded boxes. In contrast to the previous representation where the updating stages are shown, this second form has the advantage of being more compact and allows the display the whole dialogue. It also makes it possible to  see quickly the direct and indirect impacts of the argument enunciation on the other argument acceptability. In particular, the enunciation order effect can be observed, as illustrated by the graphical comparison of Example~\ref{ex:IRM_ou_radio} and its modification given in Example~\ref{ex:IRM_ou_radio_modif}.
        
        \begin{example}\label{ex:IRM_ou_radio_modif}
          Let us consider the same dialogue as in Example~\ref{ex:IRM_ou_radio}, starting with the enunciation of arguments $a,b,c$, but considering that the physician then directly asks if it is possible to do the MRI today ($l$). The radiologist replies that he can only do it in two days at the earliest~($m$). The physician then specifies that it is an emergency~($n$). The remaining arguments are then enunciated in the same order ad in the initial example.
          
          Table~\ref{tab:scenario2} displays the proposed compact visualisation  for the evolution of the acceptability of the decision variable~$c$, starting from its enunciation. Even if the final state of the argumentation graph is identical, as expected according to Proposition~\ref{prop:indep_order} established in the previous section,
          with~$c$ being rejected, the display makes it easy to observe the very important impact that the order of the actions can have on the intermediate stages that lead to it: in the new scenario,~$c$ is not accepted from the $6^{th}$~action, i.e. $n$~enunciation, with no modification until the end.  
        \end{example}
        
        This visualisation thus also illustrates the relevance of the temporality integration in the argumentation framework: the differences between the two scenarios cannot be captured by classical AAFs. 
        
        \begin{table}[t]
            \addtolength{\tabcolsep}{-1pt}
            \begin{center}
                \begin{tabular}{|c|c|c|c|c|c|c|c|c|c|c|c|}
                \hline
                $\varsigma_1$ &  c & d & e & f & g & h,i & j & k & l & m & n \\ \hline
                c & \scriptsize{$\bullet$} & \scriptsize{$\circ$} & \scriptsize{$\bullet$} & \scriptsize{$\circ$} & \scriptsize{$\bullet$} & \scriptsize{$\circ$} & \scriptsize{$\circ$} & \scriptsize{$\bullet$} & \scriptsize{$\bullet$} & \scriptsize{$\bullet$} & \scriptsize{$\circ$} \\ 
                \hhline{|=|=|=|=|=|=|=|=|=|=|=|=|} 
                % \hline \hline
                $\varsigma_2$ & c & l & m & n & d & e & f & g & h,i & j & k \\ \hline
                c & \scriptsize{$\bullet$} & \scriptsize{$\bullet$} & \scriptsize{$\bullet$} & \scriptsize{$\circ$} & \scriptsize{$\circ$} & \scriptsize{$\circ$} & \scriptsize{$\circ$} & \scriptsize{$\circ$} & \scriptsize{$\circ$} & \scriptsize{$\circ$} & \scriptsize{$\circ$} \\ \hline
                \end{tabular}
            \end{center}           
            \caption{Impact of the order in which arguments are enunciated~(lines 1, 3) on the acceptability of the arguments~(lines 2, 4).\vspace{-2mm}}
            \label{tab:scenario2}
        \end{table}
        
    \subsection{On Causality and Explanation}
    \label{sec:discu_causalite}
    
    Beyond the graphical representation of the acceptance/rejection process, the proposed formalisation of AAF into action models provides tools for richer explanations, allowing us to transfer the notion of actual causality recalled in Section~\ref{sec:causality} to the argumentation framework. Indeed, the extraction of causal chains  has been shown to be an important property for explanations~\cite{miller_explanation_2018}. In the taxonomy proposed in the case of argumentation in \cite{cyras_survey}, such a causal explanation can be related to the identification of arguments that must be removed from an argumentation graph to make a non-acceptable argument acceptable~\cite{fan2015explanations}. In causal terminology, this corresponds to the search for a \textit{but-for} cause of the non-acceptability of an argument. 
    %\mj{qui pour moi a un côté contrastif, non ?} \Camu{je pense que oui, mais Yann est l'expert dans le domaine} \mj{on lui demandera demain}
    
    However, this test does not solve cases where the occurrence of one of two events would have been sufficient to cause an effect in the absence of the other, called over-determination~\cite{menzies_counterfactual_2020}. Among others, the definition of causality underlying the NESS test, as briefly recalled in Section~\ref{sec:causality} and implemented for the considered action language, makes it possible to solve this issue.

    %\mj{coupe à la hache !!! plus rien sur les équations structurelles et le modèle de Halpern, ce qui est très violent. En même temps, je medemande pourquoi on discute de la différence entre la causalité à la Halpern et celle à la NESS ici ? Parce qu'il a été propos (par Yann) de représenter les graphes argumentatifs en graphes cauaux ? OK, alors je vais tâcher de les réintéger, en version courte}
    Structural equations~\cite{halpern2005causes} constitute another formal model of causality that addresses the over-determination issue and can be exploited in the argumentation framework, using the transformation of acyclic abstract argumentation graphs to that formalism proposed in~\cite{munro2022argumentation}. The main differences are as follows: from a philosophical point of view, the definition of causality underlying the NESS test belongs to the family of regularity approaches~\cite{andreas_regularity_2021}, whereas Halpern's definitions belong to the family of counterfactual approaches~\cite{menzies_counterfactual_2020}. Secondly, the use of action languages makes it possible to model and take into account temporality and the dynamics of the dialogue, which is a crucial component. From a mathematical point of view, according to~\cite{beckers_causal_2021}, Halpern's definition of causality can be described as `Contrastive actual weak sufficiency', whereas the one used here would be `Minimal actual strong sufficiency': in a nutshell, whereas the former emphasises that a cause must be necessary for an effect, hence the contrastive aspect, the latter emphasises sufficiency and subordinates necessity to it. From a practical point of view, the advantage of the causal approach used here is that it does not require counterfactual reasoning or interventionism, mechanisms that are computationally onerous and criticised for introducing subjectivity into causal enquiry~\cite{sarmiento_action_2022-1,wright_causation_1985}.

        These causal relations, that may lead later to causal explanations,
        %\Yann{Relation non ? Ce ne sont pas encore vraiment des explications} 
        can be represented graphically, enriching the proposed  visualisation illustrated in Figure~\ref{fig:exemple_ini} by different types of causes. This principle is illustrated in Figure~\ref{fig:exemple_fin}, commented below. 
        
        \begin{figure}[t]
            \centering
        	\resizebox{7cm}{!}{\input{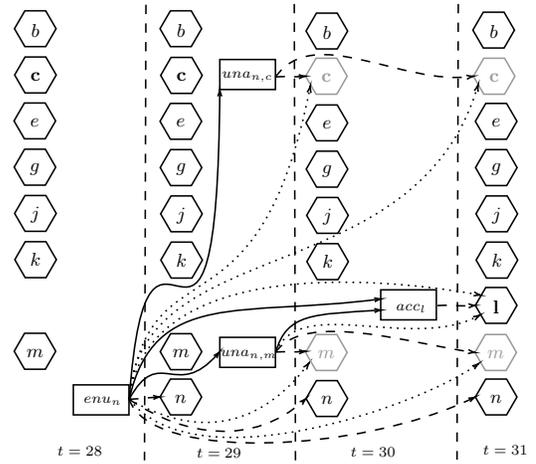}}
        	\caption{Enriched graphical representation of Example~\ref{ex:IRM_ou_radio}, with causal relations extracts:  Direct NESS-causes \DirectNESS, NESS-causes \NESS, and actual causes \actual.}
        	\label{fig:exemple_fin}
        \end{figure}
        
        \addtocounter{example}{-2}
        \begin{example} (continued) --
            %Since the representation of the traces of events and states is the same as in Figure~\ref{fig:exemple_ini}, we only comment on the causal relations found in Figure~\ref{fig:exemple_fin}. Recall that arguments~$a$, $c$ and~$l$ are the decision variables. As argument~$a$ becomes unacceptable very early in the dialogue and remains so throughout, we have chosen not to represent it in the figures.
 Figure~\ref{fig:exemple_fin} graphically displays 
%  the for
 the last four states in the trace of  Example~\ref{ex:IRM_ou_radio},  corresponding to the enunciation of argument~$n$ and the subsequent update mechanisms.           
Argument~$n$, which states the urgency of the examination,
is the one that closes the debate. As represented in Figure~\ref{fig:exemple_fin}  its enunciation in state~$S(28)$ is a direct NESS-cause (dNc) of its acceptability in the following states, a relation we denote by~$(enunciate_n,28)$ dNc $(a_n,29-31)$. Similarly, we have $(makesUnacc_{n,c},29)$ dNc $(\neg a_c,{30-31})$, $(makesUnacc_{n,m},29)$ dNc ${(\neg a_m,30-31)}$, and $(makesAcc_l,30)$ dNc $(a_l,31)$. As these examples show, this first relationship is the basic building block of causality, which is concerned with causal relationships given the actual effects of the occurrence of an event. Yet this relationship is not enough. If we want to know why argument~$l$  is acceptable at the end of the dialogue (i.e. why the decision to have an MRI on the same day is made), it is not satisfactory to simply say that it is because of the event~$(makesAcc_l,30)$.
           
To find out why the latter happens, we need to look at the NESS causes and the actual causes to construct the causal chain that lead to it. By transitivity we get that $(makesUnacc_{n,m},29)$ is a cause of the fact that $makesAcc_l$ was triggered, and therefore of the effects that triggering may have had. Going back even further and looking for the causes for which the occurrence~$(makesUnacc_{n,m},29)$ took place, we find $(enunciate_n,28)$ actual cause~$(makesUnacc_{n,m},29)$ and therefore $(enunciate_n,28)$ NESS-cause~$(\neg a_m,30-31)$. By transitivity we can derive $(enunciate_n,28)$ NESS-cause~$(a_l,31)$. This new relation allows us to say that the physician enunciating that it is an emergency is one of the causes of the final decision, an answer that already seems more satisfactory and can be included in an explanation. The same reasoning can be applied to find the causes of $(\neg a_c,31)$, the other decision variable.
        \end{example}

        % For these reasons, in our approach we propose for the moment only a visual representation of the mechanism leading to the decision. They can obviously be a support for an explanation but do not provide one independently. To be able to provide one, 
        % The latter has the advantage of making it possible to represent the whole causal chain thanks to figures like the one presented in Figure~\ref{fig:exemple_fin}. 
        % If we wish to be interested in the generation of textual explanations adapted to humans, we will have to start by defining what an explanation is in this formalism, including in particular a notion of contrast within the explanation. It will also be important to determine how and which causes to choose for the explanation.
        % \Isa{on pourrait peut-être raccourcir cette fin, en notant juste ce qu'il faudrait ajouter pour arriver à des explications simples, minimales, contrastives, etc. (et c'est déjà un peu dit dans les perspectives plus loin)}

    %%%%%%%%%%%%%%%%%%%%%%%%%%%%%%%%%%%%%%%%%%%%%%%%%%%%%%%%%%%%%%%%%%%%%
    %%%%%%%                       CONCLUSION                  %%%%%%%%%%%
    %%%%%%%%%%%%%%%%%%%%%%%%%%%%%%%%%%%%%%%%%%%%%%%%%%%%%%%%%%%%%%%%%%%%%
    \section{Conclusion}
\label{sec:conclusion}

This paper has proposed a formalisation of acyclic abstract argumentation systems in the action language of~\cite{sarmiento_action_2022-1}, establishing its formal properties: it first allows increasing the expressiveness of these models, through the integration of temporality, making it possible to examine the effect of the order of the argument enunciation. Moreover, it allows us to exploit the notion of causality associated to the action language, offering the possibility to give rich information about the argument acceptance or rejection and  justifications about the latter. The paper has proposed two types of graphical representations of the argumentation process that can be used as visual support, opening the way for new forms of argumentation explanations. 

Future works will aim at developing such explanations, applying the principles developed  in the context of eXplainable Artificial Intelligence (XAI), e.g. detailed in \cite{miller_explanation_2018}: causal chains are established as  essential for explanations, but they must also be short. The question of which relations to emphasise remains open, as well as the way  in which they can be used to define contrastive explanations, requiring to be able to reason about counterfactual scenarios. 

\paragraph*{Acknowledgements} The authors would like to thank Professor Catherine Adamsbaum, pediatric radiologist, for the enlightening discussions about the examples. This work was partly supported by the 3rd author’s chair in Artificial Intelligence (Sorbonne
Universit\'e and SCAI).

    %%%%%%%%%%%%%%%%%%%%%%%%%%%%%%%%%%%%%%%%%%%%%%%%%%%%%%%%%%%%%%%%%%%%%
    %%%%%%%                   BIBLIOGRAPHIE                   %%%%%%%%%%%
    %%%%%%%%%%%%%%%%%%%%%%%%%%%%%%%%%%%%%%%%%%%%%%%%%%%%%%%%%%%%%%%%%%%%%
    \bibliographystyle{kr}
\bibliography{kr_bib}

\end{document}